\renewcommand{\span}{\text{span}}
\newcommand{\TT}{\text{T}}
\renewcommand{\tsigma}{\tilde{\sigma}}          
\newcommand{\parf}[2]{\frac{\partial {#1}}{\partial {#2}}}
\newtheorem{defn}{Definition}[section]
\newtheorem{lemma}{Lemma}[section]
\newtheorem{propo}{Proposition}[section]
\newtheorem{coro}{Corollary}[section]
\newtheorem{theorem}{Theorem}[section]
\theoremstyle{definition}
\newtheorem{remark}{Remark}[section]
\title{Limitation of Characterizing Implicit Regularization by Data-independent Functions}
\author{Leyang Zhang\textsuperscript{\rm 1}
\AND 
Zhi-Qin John Xu\textsuperscript{\rm 2}
\AND
Tao Luo\textsuperscript{\rm 2,3,*}
\AND
Yaoyu Zhang\textsuperscript{\rm 2,4,*}
\AND
\textsuperscript{\rm 1} Department of Mathematics, College of Liberal Arts \& Sciences, \\University of Illinois - Urbana Champaign\\
\textsuperscript{\rm 2}  School of Mathematical Sciences, Institute of Natural Sciences, MOE-LSC \\ 
and Qing Yuan Research Institute, Shanghai Jiao Tong University \\
\textsuperscript{\rm 3} CMA-Shanghai, Shanghai Artificial Intelligence Laboratory\\
\textsuperscript{\rm 4} Shanghai Center for Brain Science and Brain-Inspired Technology\\
\textsuperscript{\rm *} Corresponding authors: luotao41@sjtu.edu.cn, zhyy.sjtu@sjtu.edu.cn
}
\begin{document}

\maketitle

\begin{abstract}
In recent years, understanding the implicit regularization of neural networks (NNs) has become a central task in deep learning theory. However, implicit regularization is itself not completely defined and well understood. In this work, we attempt to mathematically define and study implicit regularization. Importantly, we explore the limitations of a common approach to characterizing implicit regularization using data-independent functions. We propose two dynamical mechanisms, i.e., Two-point and One-point Overlapping mechanisms, based on which we provide two recipes for producing classes of one-hidden-neuron NNs that provably cannot be fully characterized by a type of or all data-independent functions. Following the previous works, our results further emphasize the profound data dependency of implicit regularization in general, inspiring us to study in detail the data dependency of NN implicit regularization in the future.
\end{abstract}

\section{Introduction}\label{Section Intro}

One of the greatest mysteries of neural networks (NNs) is their ability to generalize well without any explicit regularization even when they are heavily overparametrized \citep{breiman1995reflections, CZhang}. For conventional machine learning algorithms, without a regularization term, heavily overparameterized models easily overfit the data. However, for NNs, it has been empirically observed that, with proper initialization, their training trajectories are implicitly biased towards well-generalized solutions. Such a training-induced regularization effect is commonly referred to as implicit regularization and is a central issue for the deep learning theory. 

Currently, our theoretical understanding of implicit regularization is very limited. To help us understand NNs better, we make a further step to explore the following basic theoretical questions about implicit regularization: (i) How to define implicit regularization mathematically; (ii) What is the relation between implicit regularization and conventional explicit regularization; (iii) How to characterize implicit regularization. Questions (ii) and (iii) are closely related in the sense that if implicit and explicit regularization are equivalent, then we may expect to find an explicit regularization function to fully characterize any implicit regularization. In this work, we specifically address the relation between implicit regularization and a widely considered class of explicit regularization---regularization by a data-independent function. In our study, this problem is converted to whether there always exists a data-independent function $G$ over the parameter space whose value exactly quantifies the preference of a certain training process. For overparameterized linear models, specific nonlinear models and also NNs in the NTK regime, such a data-independent function $G$ can be exactly derived, detailly introduced in Section \ref{Section Related works}. On the other hand, it has been proved that, for specific problems like matrix factorization, stochastic convex optimization and one-neuron ReLU NN, implicit regularization cannot be explained by norms, strongly convex functions and data-independent functions \citep{NRazinNorm,ADauber,GVardi}.

In our work, we take a further step to propose two types of global nonlinear dynamical mechanisms beyond the description of various data-independent functions (see Section \ref{Section Overlapping mechanisms and examples}). Importantly, we provide two general recipes, i.e., Two-point and One-point Overlapping Recipes, for producing families of one-hidden-neuron NNs that realize these two dynamical mechanisms, respectively. We also prove that their implicit regularizations cannot be fully characterized by any data-independent functions. Based on these results, we believe such mechanisms commonly exist in the training dynamics of general NNs; in other words, the implicit regularization of NNs is in general data-dependent. Our contribution in this work is summarized as follows.
\begin{itemize}
    \item [(a)] We give a mathematical definition of regularization, and define implicit and explicit regularization accordingly (Section \ref{Subsection Revisiting regularization}).
    
    \item [(b)] We attempt to find the nature of implicit regularization, focusing on gradient descent. In particular, we propose two general dynamical mechanisms, i.e., Two-point and One-point Overlapping mechanisms, which put stringent constraints or even make it impossible to fully characterize implicit regularization by data-independent functions (Section \ref{Section Overlapping mechanisms and examples}). 
    
    \item [(c)] Following the two mechanisms, we present Two-point and One-point Overlapping Recipes. The examples they produce include rich classes of one-hidden-neuron NNs which realize one (or both) of these two mechanisms (Section \ref{Section Overlapping recipes}). Then we show that One-point Overlapping Recipe can be extended to two-layer NNs with multiple neurons, meanwhile discuss the idea to generalize both recipes to multi-layer NNs and multi-sample loss functions.
    
    \item [(d)] Specifically, we give examples concerning one-hidden-neuron NNs with Sigmoid and Softplus activations. Experiments on such examples are also used to support our results.
    
    \item[(e)] Based on \citep{GVardi}, we further emphasize the importance of data-dependence of implicit regularization in general, which should be carefully studied for NNs in the future.
\end{itemize}

\section{Related Works}\label{Section Related works}

In recent years, many works have studied the implicit regularization \citep{kukavcka2017regularization} for various problems. Progress has been achieved for many of them, e.g., matrix/tensor factorization, deep linear neural networks, NNs in the NTK regime, linear and nonlinear models, and general nonlinear deep NNs. We recapitulate some of these works as follows.

For general non-linear NNs, empirical studies suggest that NNs have an implicit regularization towards low-complexity function during training process \citep{arpit2017closer,kalimeris2019sgd,goldt2020modeling,jin2019quantifying}. For example, the frequency principle \citep{xu_training_2018,xu2019frequency,rahaman2018spectral,zhang2021linear,xu2022overview} quantifies the implicit regularization of ``simple solution'' by showing that NNs learn the data from low to high frequency, i.e., implicit low-frequency regularization. The deep frequency principle qualitatively explains why deep learning can be faster by empirically showing that the effective target function for a deeper hidden layer biases towards lower frequency during the trainin \citep{xu2020deep}. However, such low-complexity/low-frequency regularization of general deep non-linear models is hard to be characterized by an exact function in general. Only several special cases are studied, for example, the models linear w.r.t. trainable parameters, models linear w.r.t. both trainable parameters and inputs, and those with certain homogeneous properties. 

Various studies have been done for the first kind of NNs, i.e., NNs that are linear w.r.t. trainable parameters but are non-linear w.r.t. the input. For example, NNs in the linear regime are studied by \citet{luo2020phase} and the NTK regime is studied by \citet{jacot2018neural}. By considering functions in the phase domain, it has also been shown that gradient descent (GD) for the training of such NNs often picks a low-frequency function from multiple solutions \citep{zhang2021linear,luo2020theory}, and such behavior can be exactly formulated by a data-independent function. Another characterization of implicit regularization for NNs in the linear regime, presented in \citet{YZhang} and \citet{mei2019mean}, uses norm difference between the initial and learned parameters or between the initial and learned NN outputs. Finally, \citet{chizat2020implicit} shows that infinitely wide two-layer neural networks in the linear regime with homogeneous activations can be fully characterized as a max-margin classifier in certain situations.

The study of the second kind of model, i.e., those linear w.r.t. both trainable parameters and inputs, yields a series of results as well. One of the focues is deep linear NN. The implicit regularization due to depth in deep linear NNs are quantitatively studied and exploited; these include biasing towards simple functions to improve the generalization \citep{gissin2019implicit} and accelerating the training by providing a regularization that can be approximated by a momentum with adaptive learning rates to accelerate the gradient descent (GD) \citep{arora2018optimization}. For others, \citet{soudry2018implicit} shows that GD takes the linearly fully-connected networks to solutions with implicit regularization of max-margin, while \citet{gunasekar2018implicitcnn} shows that GD takes linear convolutional networks to linear solutions with another penalty in the frequency domain. Besides, deep matrix factorization by deep linear networks with GD induces nuclear norm minimization of the learned matrix, leading to an implicit low-rank regularization  \citep{gunasekar2018implicit,arora2019implicit,chou2020gradient}.

As far as we know, only specific and limited models of the third kind, i.e., the homogeneous ones, have been studied. For example, \citet{woodworth2020kernel} studies simple homogeneous models for which the implicit bias of training with gradient descent can be exactly derived as a function of the scale of the initialization.

While there are fruitful progress in explicitly characterizing the implicit regularization of (at least partially) linear models, explicitly characterizing the implicit regularization in the training of general non-linear models is more new, and encounters much difficulty. Therefore, with a focus on NNs, another line of works considers constructing counter-examples that provably cannot be characterized explicitly by specific types of functions like norms, strongly convex functions or more general data-independent functions \citep{NRazinNorm,ADauber,GVardi}. We list some of them below. 

\citet{NRazinNorm} proved that, under some conditions, the matrix completion task performed by a deep linear NN, when trained by gradient descent with mean square error, can converge to an infimum, but there is no minimum, that is, this infimum cannot be obtained. Thus, in this example the implicit bias of the deep linear NN can not be described by any norm. Another kind of example, given by \citet{ADauber}, is based on stochastic convex optimization. More recently, \citet{GVardi} makes a step closer to general nonlinear NNs by providing examples of one-neuron ReLU NNs. Based on zero-initialization and the manually-assigned derivative of ReLU at $0$, they show that the training of such networks cannot be described by any useful data-independent functions, in other words, the training depends largely on data. 

Compared to these previous attempts, our work, with a focus on (non-linear) NNs, makes a step further in characterizing the implicit biases. Importantly,  we analyze the reason behind the failure in using data-independent functions to explicitly characterize implicit regularization in network training, presenting general mechanisms (Section \ref{Subsection Overlapping mechanisms} mechanism) and corresponding example construction recipes (Section \ref{Subsection Overlapping mechanisms}). Our examples are all based on the recipes (see Section \ref{Subsection Example for two-point} and \ref{Subsection Example for one-point}), which generate diverse and rich classes of one-hidden-layer NNs. These are more systematic and universal compared to the existing ones we know, such as NNs in NTK regime, or those employing a specific type of activation (e.g., ReLU). Third,  we follow the usual set-up of NN training, always considering over-parametrized networks (the number of parameters exceeds the number of samples).  Therefore, due to the generality and close relation to application, our results highlight profound data-dependency of NN implicit regularization and provide a valuable insight for advancing the study in this area. Overall, our results emphasize the profound data-dependency of implicit regularization in NNs. This aspect warrants thorough exploration in future studies, given its relevance and potential impact on practical applications.

\section{Preliminaries}\label{Section Preliminaries}
We begin with definitions and notations we will use frequently throughout our discussion below. We start with activation functions and models. 

\begin{defn}\label{Defn Activation}
    $\sigma: \sR \to \sR$ is a real-valued function which we call an activation function. Its reciprocal is denoted by $\tsigma$ (provided that it exists), i.e., $\tsigma(x) = \frac{1}{\sigma(x)}$ when $\sigma(x) \neq 0$.
\end{defn}

In this definition, no smoothness requirements are imposed on $\sigma$ (or $\tsigma$), however, in our One-point Overlapping Recipe, we further require that $\sigma$ is continuously differentiable. 

\begin{defn}\label{Defn Model}
    A model is a parametrized function $g: \sR^M \times \sR^d \to \sR$. For any $(\vtheta, \vx) \in \sR^M \times \sR^d$, $\vtheta$ is the parameter of $g$ and $\vx$ the input of $g$. Thus, for each $\vtheta \in \sR^M$ we have a function $g(\vtheta, \cdot): \sR^d \to \sR$ and the training of $g$ modifies $\vtheta$. 
\end{defn}

We will often consider a one-neuron network. In this case $g$ has the form $g(\vtheta, \vx) = a \sigma(\vw^\TT \vx)$, where $\vtheta = (a, \vw) \in \sR \times \sR^d$ is its parameter and we write $\vw^\TT \vx = (\vw, \vx)$ the inner product of $\vw$ and $\vx$ on $\sR^d$.

Then we define our dataset and loss function for training a model.

\begin{defn}\label{Defn Dataset and loss}
    A dataset is denoted by $S = \{(\vx_i, y_i): i \in \fI\} \in \sR^d \times \sR$ for a given index set $\fI$. A loss function (with respect to a given dataset $S$) is denoted by $L_S = L(\cdot, S)$.  
\end{defn}

An example of $L_S$ is 
\[
    L_S(\vtheta) = L(\vtheta, S) = |a \sigma(\vtheta^\TT \vx) - y|^2,  \quad \quad S = {(\vx, y)} \subseteq \sR^d \times \sR,  
\]
where $\vtheta = (a, \vw) \in \sR^{1+d}$. If $L_S$ has a minimum, we further denote the set of its global minima by $\fM_S$. For example, if $\min L_S = 0$ then $\fM_S = L_S^{-1}\{0\}$. Note that in general $\fM_S$ depends on $S$, and we shall see in the next few sections that the failure of characterizing implicit regularization by a data-independent function is closely related to the strong dependence of $\fM_S$ on $S$.

Finally, we will write $\gamma$ for a parametrization of a curve as well as its image. More notations will be introduced in the later sections.

\section{Regularization} \label{Section Regularization}

In conventional machine learning problems, regularization is often realized by adding a specific term to the loss function, namely explicit regularization, to help solve most ill-posed problems. In constrast, one of the magics of NNs is that, as aforementioned, its training often finds a good solution, as if it does the regularization ``implicitly'' \citep{CZhang}. To make the future study of explicit and implicit regularization more systematic and unified, we revisit the notion of regularization in this section. Mathematical formulation of general regularization is provided, which goes beyond the scope of gradient flow (GF) or gradient descent (GD). Based on this, we define implicit regularization and explicit regularization accordingly. Finally, we consider implicit regularization of GF for a loss function and discuss two types of characterization of them, both involving data-independent functions (see Example (b) in Section \ref{Subsection Revisiting regularization}). These characterizations will be our focus in the rest part of the paper. 

\subsection{Revisiting Regularization}\label{Subsection Revisiting regularization}

We begin by defining the regularization in a general sense as a mapping between collections of algorithms.
Let $g: \sR^M \times \sR^d \to \sR$ be a model as before. We say $A$ is a \textit{method} if it maps an arbitrary dataset $S$ to a subset $A(S)$ of $\sR^M$. We call $A(S)$ the \textit{solution set of $A$}. 

\begin{defn}[Regularization]\label{Defn Regularization} 
    Let $\fA, \fA'$ be two collections of methods that find solutions to the parameters of $g(\vtheta, \cdot)$. A regularization (from $\fA$ to $\fA'$) is just any map $\fR: \fA \to \fA'$, i.e., $\fR$ assigns a method $A$ in $\fA$ to some $\fR(A) \in \fA'$. 
\end{defn} 

The effect of this assignment is that $\fR$ implicitly relates the solution set of $A$ to that of $\fR(A)$, provided that both exist. Also note that while this definition emphasizes the mathematical essence of regularization in general, the mapping ($\fR$) itself could be difficult to determine; instead, the study of it in practice may focus more on understanding the properties of such mappings between specific collections of methods. Examples of such are implicit and explicit regularizations.  

For these we focus on methods that find the global minima of a loss function $L_S$, for any given dataset $S$. Let $A_{\text{min}}$ be one of such methods, namely
\[
    A_{\text{min}}(S) = \{\vtheta^* \in \sR^M: L_S(\vtheta^*) = \mathrm{min}_{\vtheta \in \sR^M} L_S(\vtheta)\} = \mathrm{argmin}_{\vtheta \in \sR^M} L_S(\vtheta). 
\]
We also define $\fA$ to be a collection of methods $A_{\text{min}}$ finding the global minima of $L_S$. The implicit regularization (for GF) will then be a mapping associating each $A_{\text{min}}$ to a gradient flow in $\sR^M$. These notations will be used throughout the discussion below. 

\begin{defn}[implicit regularization for GF]\label{Defn Implicit regularization}
    Let $L$ be a loss function (Definition \ref{Defn Dataset and loss}). Denote the gradient flow (GF) of $L_S$ starting at $\vtheta_0$ by $A_{\text{GF}, \vtheta_0}$, namely, $A_{\text{GF},\vtheta_0}$ is defined by 
    \begin{equation}
        \left \{\begin{aligned}
                    \dot{\vtheta}(t) &= -\nabla_{\vtheta} L_S(\vtheta); \\
                    \vtheta(0) &= \vtheta_0. 
                \end{aligned}\right.
    \end{equation}
    Then a regularization $\fR: \fA = \{A_{\text{min}}: A_{\text{min}} \text{ finds the global minima of $L_S$}\} \to \{A_{\text{GF},\vtheta_0}: \vtheta_0 \in \sR^M\}$ is called an implicit regularization of GF for $L$, or simply, an implicit regularization for $L$. 
\end{defn}

For example, we can consider the gradient flows on the loss landscape of a linear model, i.e., $\{ A_{\text{GF}, \vtheta_0}: \vtheta_0 \in \sR^M\}$ is the collection of gradient flows with respect to the loss function
\[
    L(\vtheta, S) = \sum_{i=1}^n |\vtheta\cdot \vx_i - y_i|^2, \quad \vtheta \in \sR^M, n < M. 
\]
For the sample $S = \{(\vx_i, y_i)\}_{i=1}^n$ we require that $(\vx_1, ..., \vx_n)$ has full rank. Let $\fA := \{A_{\vtheta_0}: \vtheta_0 \in \sR^M\}$ where each $A_{\vtheta_0}$ finds the point in $L^{-1}(0)$ which has the shortest distance to $\vtheta_0$. Then we obtain a map $\fR: \fA \to \{ A_{\text{GF}, \vtheta_0}: \vtheta_0 \in \sR^M\}$ by $\fR(A_{\theta_0}) = A_{\text{GF},\vtheta_0}$.

To motivate the study of implicit regularization, we then give the following definition of explicit regularization. 

\begin{defn}[explicit regularization]\label{Defn Explicit regularization}
    Let $L$ be the loss function as before. Given a collection $\fA'$ of methods such that for any $A'\in \fA'$, any given dataset $S$ and any $\vtheta_0^* \in A'(S)$, we have 
    \begin{equation}
        J_S(\vtheta_0^*, A') = \min_{\vtheta\in\sR^M} J_S(\vtheta, A')
    \end{equation}
    for some function $J_S: \sR^M \times \fA' \to [-\infty, \infty]$. An explicit regularization for $L$ is a regularization (i.e., a map) $\fR_{\fA'}: \fA \to \fA'$. Here $\fA := \{A_{\text{min}}\}$. 
\end{defn}

\textbf{Examples}. 
\begin{itemize}
    \item [(a)] Let $J_S(\vtheta, A') = L(\vtheta, S) + H(\vtheta, A')$ for any given $H: \sR^M \times \fA' \to \sR$. This is just the form of many commonly used explicit regularization in machine learning. For example, consider $H(\vtheta, A') := \|\vtheta \|_1$, $H(\vtheta, A') = \|\vtheta\|_2$, or more generally $H(\vtheta, A') = \|\vtheta \|_r$, $r \geq 1$. 

    \item [(b)] 
    Consider as before $\fA' = \{A_{\text{GF},\vtheta_0}: \vtheta_0 \in \sR^M\}$. Because each $A_{\text{GF}, \vtheta_0}$ is just a GF in $\sR^M$, it is determined by $\vtheta_0$. This means we obtain a map $G: \sR^M \times \sR^M \to \sR$ such that for any $\vtheta_0^* \in A_{\text{GF}, \vtheta_0}$, 
    \[
        G(\vtheta_0^*, \vtheta_0) = \min_{\vtheta\in\fM_S} G(\vtheta, \vtheta_0). 
    \]
    The construction of $G$ is possible in a trivial way: we may find some $c, c' \in \sR$ with $c' > c$, then set $G(\vtheta_0^*, \vtheta_0) = c$ for any $\vtheta_0 \in \sR^M$ and any $\vtheta_0^* \in A_{GF,\vtheta_0}$, and $G(\vtheta^*, \vtheta) = c'$ otherwise. In certain situation, we can make $G$ behave much better. For example, if the gradient flows are on the loss landscape of a linear regression problem, we may simply set $G(\vtheta^*, \vtheta) = | \vtheta^* - \vtheta |$ for all $(\vtheta^*, \vtheta) \in \sR^M \times \sR^M$. 
    
    Also notice that neither $H$ nor $G$ depends on $S$. In such cases we will say $\fR_{\fA'}$ is \textit{characterized by data-independent function $H$ (or $G$)}. 
\end{itemize}

\subsection{Characterization of Implicit Regularization}\label{Subsection Implicit reg characterization}

A direct approach to understand the implicit regularization $\fR$ is to look at the value of certain data-independent function $G$ over $\fM_S$ to determine the element chosen (or preferred) by $\fR$. Depending on the amount of information about $\fR$ provided by $G$, we classify the following two types of characterization of $\fR$ by $G$. 

\begin{defn}\label{Defn Characterize implicit reg} 
    We say that an implicit regularization for $L$ is \textit{characterized by a data-independent function} $G: \sR^M \times \sR^M \to \sR$ if for any $S$ and any $\vtheta_0 \in \sR^M$, the operation
    \begin{equation}
         \mathrm{argmin}_{\vtheta \in \fM_S} G(\vtheta, \vtheta_0)=\vtheta_0^*
    \end{equation}
    is well defined, i.e., $\vtheta_0^*$ exists and is unique. Here $\vtheta_0, \vtheta_0^*$ are the initial value and long-term limit of the GF for $L$, respectively. 
\end{defn} 

\begin{defn}\label{Defn Characterize implicit reg in weak sense}
    We say that the implicit regularization for $L$ is \textit{characterized by a data-independent function} $G: \sR^M \times \sR^M \to \sR$ \textit{in the weak sense} if for any $S$ and any $\vtheta_0 \in \sR^M$, 
    \begin{equation}
        \min_{\vtheta \in \fM_S} G(\vtheta, \vtheta_0) = G(\vtheta_0^*, \vtheta_0), 
    \end{equation} 
    where $\vtheta_0, \vtheta_0^*$ are the initial value and long-term limit of the GF for $L$, respectively. 
\end{defn} 

It is not difficult to see that if an implicit regularization is characterized by a data-independent function $G$, then $G$ characterizes it in the weak sense. In other words, Definition \ref{Defn Characterize implicit reg} is stronger than Definition \ref{Defn Characterize implicit reg in weak sense}. Moreover, note that a constant function $G$ on $\sR^M \times \sR^M$ characterizes any implicit regularization for $L$ in the weak sense. Thus, every implicit regularization for $L$ can be characterized in the weak sense, however, what are interesting are those non-trivial ones. Conversely, if for some implicit regularization $\fR$, the only data-independent functions characterizing it in the weak sense are constant ones, then $\fR$ cannot be characterized by data-independent function.

\section{Overlapping Mechanisms and Examples} \label{Section Overlapping mechanisms and examples} 

Let $\fR$ be an implicit regularization of GF for a loss function $L$. By our definitions above, the study of $\fR$ in essence is to trace the families of training trajectories of GF. In this section, we focus on the characterization of implicit regularization of GF for $L$ by a data-independent function $G$, proposing dynamical mechanisms that put stringent constraints on $G$ or even make data-independent characterization impossible. These are the Two-point Overlapping Mechanism (Lemma \ref{Lem Two-point}) and One-point Overlapping Mechanism (Lemma \ref{Lem One-point}), both of which can be realized by one-hidden-neuron NNs with common activation functions. This will be shown by two numerical examples (in Section \ref{Subsection Example for two-point} and \ref{Subsection Example for one-point}) using Sigmoid and Softplus, respectively. Furthermore, they serve as prototypes of our Two-point and One-point overlapping Recipes. 

\subsection{Overlapping Mechanisms}\label{Subsection Overlapping mechanisms}

\begin{lemma}[Two-point Overlapping Mechanism]\label{Lem Two-point}
    Fix $\vtheta_0 \in \sR^M$. Let $I$ be an index set and $\left \{ S_i \right \}_{i \in I}$ be a collection of datasets. For each $i \in I$, let $\vtheta_i^* \in \fM_{S_i}$ denote the long-term limit of the GF for $L(\cdot, S_i)$ starting at $\vtheta_0$. Suppose that for any $i \in I$, there is some $j \in I \backslash \{i\}$ such that $\vtheta_i^* \neq \vtheta_j^*$ and $\{\vtheta_i^*, \vtheta_j^*\} \subseteq \fM_{S_i} \cap \fM_{S_j}$ (see Figure \ref{Figure implicit CP1} for an example). Then the following results hold. 
    
    \begin{itemize}
        \item [(a)] The implicit regularization for $L$ cannot be characterized by any data-independent function $G: \sR^M \times \sR^M \to \sR$. 
    
        \item [(b)] Any data-independent function $G$ that characterizes the implicit regularization for $L$ in the weak sense is constant on $\{ \vtheta_i^* \}_{i \in I}$. 
    
        \item [(c)] Any continuous data-independent function $G \in C(\sR^M \times \sR^M)$ that characterizes the implicit regularization for $L$ in the weak sense is constant on the closure of $\{ \vtheta_i^* \}_{i \in I}$. 
    \end{itemize}  
\end{lemma} 
\begin{proof}
See the proof of Lemma \ref{app two-point lemma} in Appendix. 
\end{proof}

\begin{lemma}[One-point Overlapping Mechanism]\label{Lem One-point}
    Fix $\vtheta_0, \vtheta_0^* \in \sR^M$. Let $\{ \gamma_i \}_{i=1}^M$ be $M$ trajectories of GF for $L$ from $\vtheta_0$ to $\vtheta_0^*$, such that $\lim_{t \to \infty} \frac{\dot{\gamma_i}(t)}{|\dot{\gamma_i} (t)|}$ exist for all $i$ and the limits are linearly independent. If the implicit regularization for $L$ is characterized by a data-independent function $G \in C^1 (\sR^M \times \sR^M)$ in the weak sense, then $\nabla G(\cdot, \vtheta_0)|_{\vtheta_0^*} = 0$, where the derivative is taken with respect to the first entry of $G$. (see Figure \ref{Figure Implicit CP2} for an example) 
\end{lemma}
\begin{proof}
See the proof of Lemma \ref{appendix G-derivative lemma} in Appendix. 
\end{proof} 

The One-point Overlapping Mechanism puts stringent constraint on $G$. If this mechanism is further strengthened such that trajectories starting from $\vtheta_0$ with different data $S$ can overlap at any point in a neighbourhood of $\vtheta_0^*$, then the corresponding implicit regularization cannot be characterized by any data-independent function. This strengthened mechanism can be realized for special cases in experiment, and we will try to provide a general recipe for this mechanism in our future works.

Two-point Overlapping Mechanism (Lemma \ref{Lem Two-point}), which works for arbitrary function $G: \sR^M \times \sR^M \to \sR$, is the heart of Two-point Overlapping Recipes. It will be used to prove Theorem \ref{Theorem 1}. One-point Overlapping Mechanism (Lemma \ref{Lem One-point}), on the other hand, is more specific in that it requires $G$ to be continuous. It is the heart of the One-point Overlapping Recipe and it will be used to prove Theorem \ref{Theorem 2}. 

In the following subsections, we provide concrete examples of one-hidden-neuron NNs with common activation functions that can realize each of the above mechanisms. These two specific examples further inspire our general recipes in Section \ref{Section Overlapping recipes} for producing rich classes of one-hidden-neuron NNs.

\subsection{Example for Two-point Overlapping Mechanism}\label{Subsection Example for two-point}

In this example, we consider the one-hidden-neuron NN with Sigmoid activation, i.e.,
\[
    f(\vtheta, x) = f(w,a,x) = \frac{a}{1 + e^{-wx}}, \quad \quad \vtheta = (w, a) \in \sR^2. 
\]
and one-sample $\ell_2$ loss
\[
    L_S(\vtheta) = L(\vtheta, \{(x,y)\}) = |f(\vtheta, x) - y|^2, \quad \quad S = \{(x,y)\} \in \sR^2.
\]
Notice that for any $S$, the global minimum of $L_S$ is 0 and $L_S^{-1}\{0\}$ is a curve in $\sR^2$. Indeed, $f(\vtheta, x) = y$ is equivalent to 
\[
    a =  y\cdot e^{-wx} + y, 
\]
so that $a$ is a function of $w \in \sR$. Therefore, as illustrated in Figure \ref{Figure implicit CP1}, by properly choosing two singleton datasets $S_1=(x_1,y_1)$ and $S_2=(x_2,y_2)$, we may obtain two sets of global minima for $L(\cdot, S_1)$ and $L(\cdot, S_2)$, respectively, which intersect at two points. Then, assigning each of these two points as a long-time limit (for a trajectory of GF) denoted by $\vtheta_1^*$ and $\vtheta_2^*$ respectively, we ``trace back'' the trajectories to obtain two curves in the stable manifolds of $\fM_{S_1}$, $\fM_{S_2}$, respectively. Then we select a point $\vtheta_0$ in their intersection. By this procedure, we find a $\vtheta_0$, two datasets $S_1$ and $S_2$ and two gradient trajectories $\gamma_1, \gamma_2$ converging to two points in $\fM_{S_1} \cap \fM_{S_2}$ as required by the Two-point Overlapping Mechanism (Figure \ref{Figure implicit CP1}) . 

Thus, the implicit regularization for $L$ can only be characterized by a data-independent function $G: \sR^2 \times \sR^2 \to \sR$ in the weak sense, because we must have $G(\vtheta_1^*, \vtheta_0) = G(\vtheta_2^*, \vtheta_0) = \min_{\vtheta \in \fM_{S_1}} G(\vtheta, \vtheta_0)$. Clearly, $\vtheta_1^*$ and $\vtheta_2^*$ cannot be differentiated without information from data by any data-independent function $G$. Therefore, as the GF trajectories differentiate $\vtheta_1^*$ and $\vtheta_2^*$, the corresponding implicit regularization must be data-dependent.

\begin{figure}[htpb]
    \centering
    \includegraphics[width=0.53\textwidth]{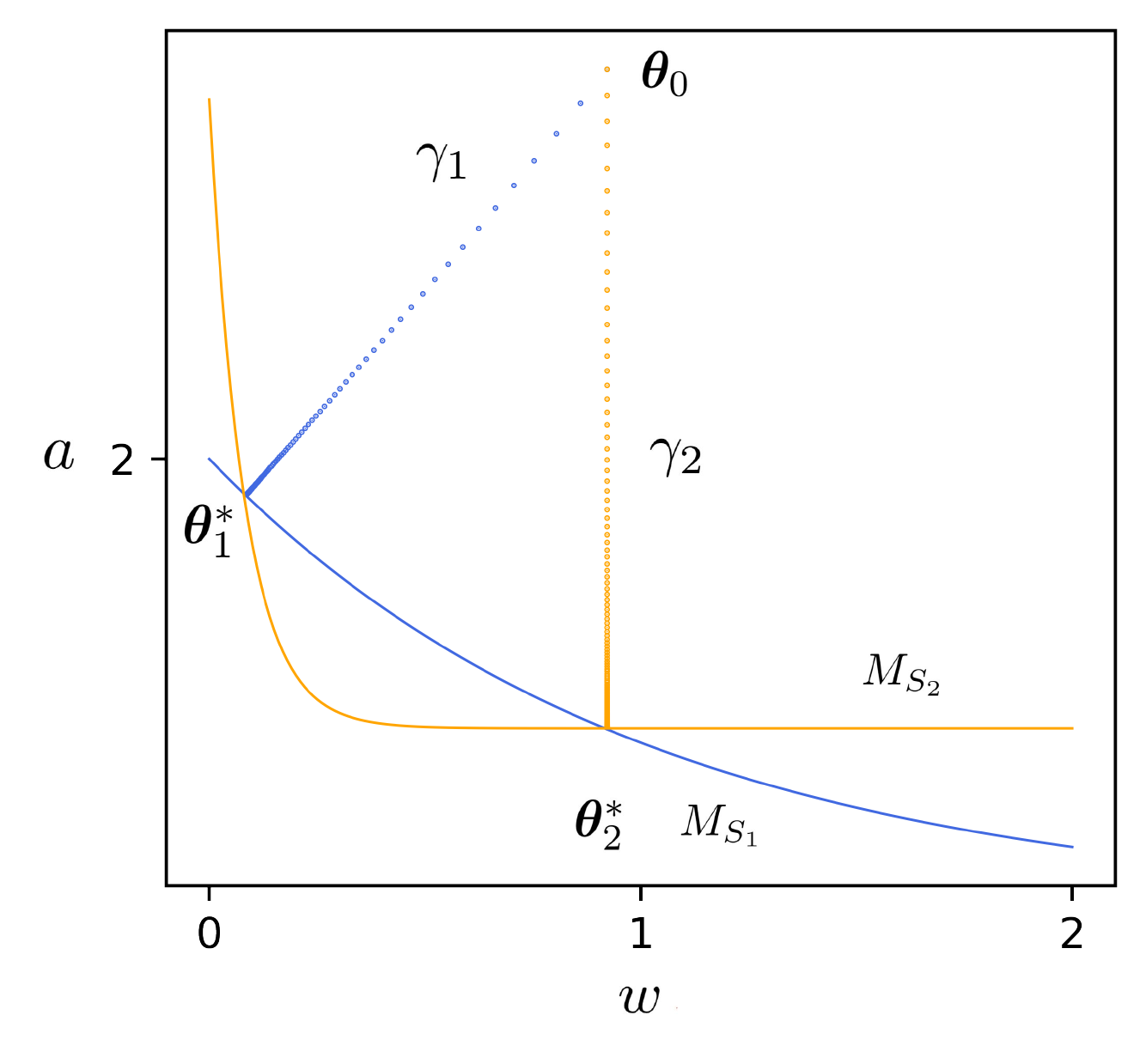} 
    \caption{Two-point Overlapping Mechanism realized by a one-neuron-hidden NN with Sigmoid activation. Tracing back of gradient trajectories $\gamma_1, \gamma_2$ finds $\vtheta_0$. This procedure inspires our Two-point Overlapping Recipe. We choose the initial point $\vtheta_0 = (0.922, 2.868)$. The sample for (i) the blue lines is $(x_1, y_1) = (1,1)$; (ii) the orange lines is $(x_2,y_2) = (12.307, 1.400)$.}
    \label{Figure implicit CP1}  
\end{figure}

\subsection{Example for One-point Overlapping Mechanism}\label{Subsection Example for one-point}

In this example, we consider another one-hidden-neuron NN with Softplus activation, i.e., 
\[
    f(\vtheta, x) = f(w,a,x) = a \log(1 + e^{wx})
\]
and the one-sample $\ell_2$ loss. Notice that if $y = -a \sigma(w x)$ then $f(w, -a, x) = y$, which means $(w, -a) \in L_S^{-1}\{0\}$ for $S = \{(x, -a \sigma(xw)\}$, for any $x \in \sR$. Therefore, as illustrated in Figure \ref{Figure Implicit CP2}, we first choose an initial point $\vtheta_0 = (w_0, a_0)$. Then we use the one-element dataset $S = \{(x, -a_0 \sigma(x w_0)) \}$ with various $x$, by which we obtain distinct trajectories of GF from $\vtheta_0$ to $\vtheta^*=(w_0, -a_0)$, each one converging to $\vtheta^*$ from different directions. In Figure \ref{Figure Implicit CP2}, we show both the trajectories (dashed line) and $\fM_S$'s (solid line), i.e., the sets of global minima of $L$, which clearly exhibits the One-point Overlapping Mechanism. 

Thus, if the implicit regularization for $L$ is characterized by a data-independent function $G \in C^1(\sR^2 \times \sR^2) \to \sR$ in the weak sense, then $\nabla G(\cdot, \vtheta_0)|_{\vtheta^*} = 0$, where the derivatives are taken with respect to the first entry of $G$. 

\begin{figure}[htpb]
    \centering
    \includegraphics[width=0.45\textwidth]{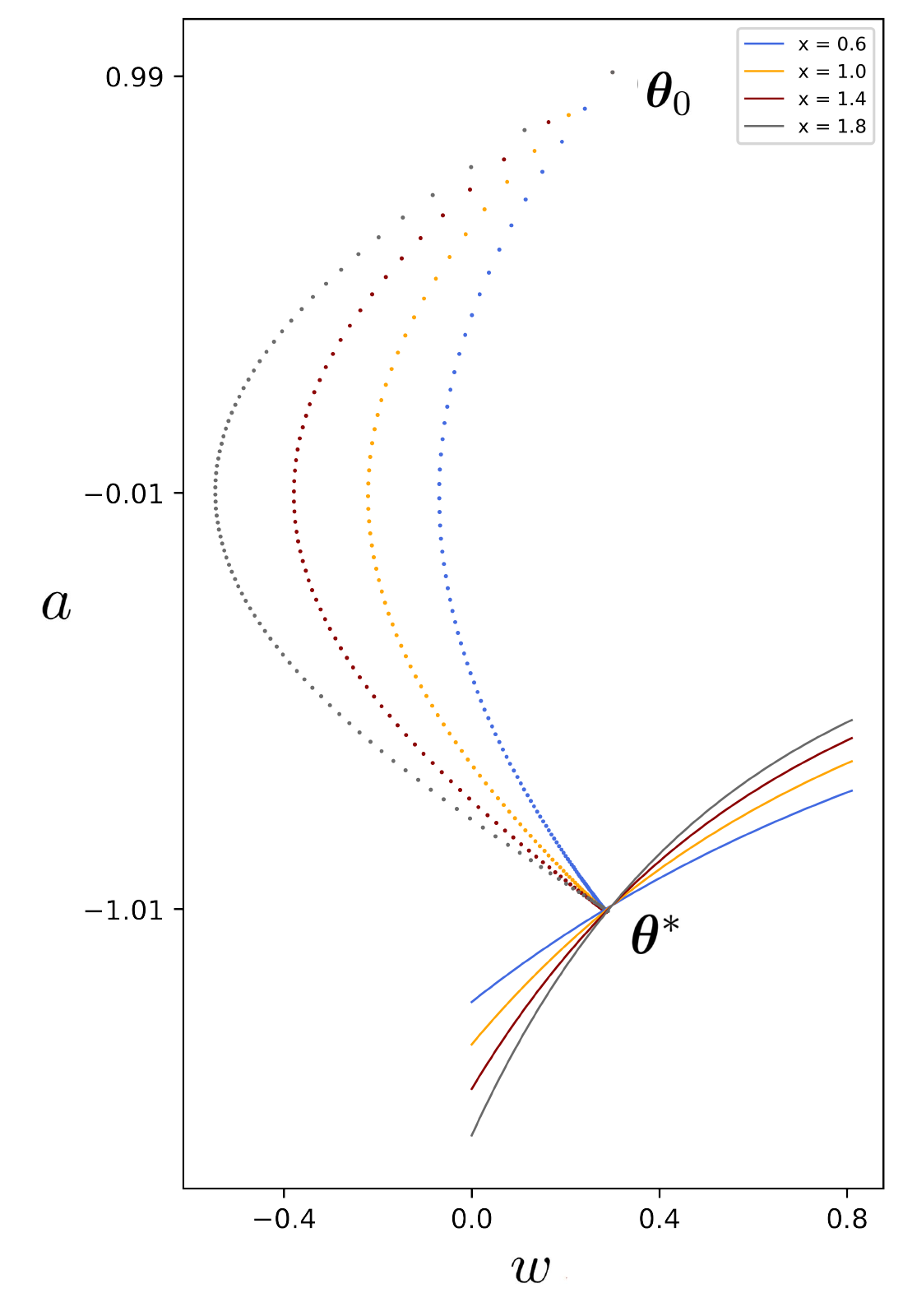} 
    \caption{One-point Overlapping Mechanism realized by an one-neuron-hidden NN with Softplus activation. The dashed lines are gradient trajectories and the solid lines are $\fM_S = L_S^{-1}\{0\}$ for different singleton datasets $S = \{(x,y)\}$. We choose initial value $\vtheta_0 = (w_0, a_0) = (0.3,1)$. Then $\vtheta^* = (w_0, -a_0) = (0.3,-1)$. The dataset for (i) blue lines is $(x,y) = (0,6, -a_0 \sigma(0.6 w_0))$; (ii) orange lines is $(x,y) = (1.0, -a_0 \sigma(w_0))$; (iii) brown lines is $(x,y) = (1.4, -a_0 \sigma(1.4 w_0))$; (iv) grey lines is $(x,y) = (1.8, -a_0 \sigma(1.8 w_0))$.}
    \label{Figure Implicit CP2} 
\end{figure}

\section{Overlapping Recipes} \label{Section Overlapping recipes}

In this section, we realize the overlapping mechanisms in Section \ref{Section Overlapping mechanisms and examples} by providing two general recipes which produce rich classes of one-hidden-layer NNs, none of which can be (fully) characterized by a type of, or all data-independent functions. These recipes are exactly inspired by our numerical examples above; in fact, they can be viewed as generalizations of them. 

\subsection{Two-point Overlapping Recipe (Part A)}\label{Subsection CP1 partA}

Our Two-point Overlapping Recipe produces one-hidden-neuron networks which realizes the Two-point Overlapping Mechanism (Lemma \ref{Lem Two-point}). It works by selecting a common initial value $\vtheta_0$ for two gradient trajectories with respect to $S_1, S_2$, which converge to two points $\vtheta_1^*, \vtheta_2^* \in \fM_{S_2} \cap \fM_{S_1}$, respectively. In this procedure, the choice of $\vtheta_0$ and $\vtheta_1^*$ are (almost) arbitrary and one of the datasets ($S_1$ or $S_2$) can be chosen (almost) arbitrarily. Moreover, using this construction procedure, we can make $\sigma \in C^m(\sR)$ for any $m \in [0, \infty]$ and with other nice properties (monotonicity, periodicity, etc.).

The following procedure constructs $\sigma$ and finds $\vtheta_0$, $\vtheta_1^*, \vtheta_2^*$, and $S_1, S_2$. For any $h \in \sR^M$, define $P_h: \sR^M \to \span \{h\}$ be the orthogonal projection from $\sR^M$ onto $\span \{h\}$. 

\begin{itemize}
    \item [(a)] Find $\vw_0, a_0$ with $\vw_0 \neq 0$. Find $\vtheta_1^* = (\vw_1^*, a_1^*)$ with $\vw_1^* \neq \vw_0$, $S_1 = (\vx_1, y_1)$ with $\vx_1 \neq 0$, and $\sigma_1: \sR \to \sR$ such that the trajectory of GF for $L(\vtheta, S_1) = |a \sigma_1(\vx_1^\TT \vw) - y_1|^2$ starting at $\vtheta_0$ converges to $\vtheta_1^*$ as $t \to \infty$, and $L(\vtheta_1^*, S_1) = 0$.  
    
    \item [(b)] Let $E_1 = \overline{\{\vx_1^\TT \vw(t) \in \sR: t \geq 0\}} \subseteq \sR$. 
    
    \item [(c)] Find some $\vw_2^*$ such that $\vx_1^\TT\vw_2^* \notin E_1 \cup \{0\}$, and if $l$ is the line segment connecting $\vw_0$ and $\vw_2^*$, then $0 \neq P_h(\vw_1^*) \notin P_h(l)$, where $h = \vw_2^* - \vw_0$. 
    
    \item [(d)] Find some $a_2^*$ and re-define $\sigma_1$ (if necessary) at $\{\vx_1^\TT \vw_2^*\}$ such that $a_2^* a_0 \geq 0$ and $a_2^* \sigma_1(\vx_1^\TT \vw_2^*) = y_1$. Let $\tilde{E}_1 = E_1 \cup \{\vx_1^\TT \vw_2^*\}$. 
    
    \item [(e)] Find $\vx_2 \in \span \{\vw_2^* - \vw_0\} \backslash \{0\}$ with $\sup \{|\vx_2|^{-1}|z|: z \in \tilde{E}_1\} < \min \{|P_h(\vw_1^*)|, |P_h(\vw_0)|, |P_h(\vw_2^*)|\}$. 
    
    \item [(f)] Define $\sigma_2: \sR \to \sR$ and $y_2$, such that i) $\sigma_2(\vx_2^\TT \vw) = \sigma_1(\vx_2^\TT \vw)$ whenever $\vx_2^\TT \vw \in \tilde{E}_1$, ii) $a_1^* \sigma_2(\vx_2^\TT \vw_1^*) = y_2$, and iii) the trajectory $\gamma := (\gamma_{\vw}, \gamma_a)$ of GF for $L(\cdot, S_2)$ starting at $\vtheta_0$ converges to $\vtheta_2^*$ as $t \to \infty$. Let $\sigma := \sigma_2$. 
\end{itemize}

\begin{remark}
    In Corollary \ref{expoential proposition corollary}, we show that step (a) and (f) are well-established. This is achieved by Proposition \ref{exponential proposition}, which, based on the exponential function $e^x$, shows that given $S = \{(\vx, y)\} \subseteq \sR^d$ and two points $\vtheta_0, \vtheta^* \in \sR^{d+1}$, there is a $\sigma: \sR \to \sR$ such that the GF of $L_S(a, \vw) = |a \sigma(\vx^T \vw) - y|^2$ starting from $\vtheta_0$ converges to $\vtheta^*$. However, note that the choice of exponential function is just for the simplicity of proof; in general we could prove by using many other functions. 
\end{remark} 

\subsection{Two-point Overlapping Recipe (Part B)}\label{Subsection CP1 partB}

The Two-point Overlapping Recipe (Part A) gives one-hidden-neuron networks that make it impossible to characterize the implicit regularization for $L$ by any data-independent function $G$. In fact, we can repeat the construction steps in Section \ref{Subsection CP1 partA} to obtain countably many datasets $\{ S_n \}_{n=1}^\infty$ and countably many long-term limits of gradient trajectories $\{ \vtheta_n^* \}_{n=1}^\infty$ such that if the implicit regularization for $L$ is characterized by a data-independent function $G$ in the weak sense, then $G$ must be constant on $\{ \vtheta: \vtheta = \vtheta_n^*, n \in \sN \}$. The detailed procedure is given below. As in Section \ref{Subsection CP1 partA}, this procedure can also give a $\sigma$ of any degree of smoothness and with nice properties (monotonicity, periodicity, etc.). 

The construction is described as follows. For $n = 1$, do the steps (a), (b) to obtain $\vtheta_0$, $\vtheta_1^*$, $\sigma_1$ and $E_1$. For $n \geq 2$, do the following steps. 

\begin{itemize}
    \item [(a)] Find some $k \in \{1, ..., n-1\}$ and $\vw_n^* \in \sR^{M-1}$ such that $\vx_k^\TT \vw_n^* \notin E_{n-1} \cup \{0\}$, and if $l$ is the line segment connecting $\vw_0$ and $\vw_n^*$ then $0 \neq P_h(\vw_k^*) \notin P_h(l)$, where $h = \vw_n^* - \vw_0$. 
    
    \item [(b)] Find some $a_n^*$ and re-define $\sigma_{n-1}$ (if necessary) at $\{\vx_k^\TT \vw_n^*\}$ such that $a_n^* a_0 \geq 0$ and $a_n^* \sigma_{n-1}(\vx_k^\TT \vw_n^*) = y_k$. Let $\tilde{E}_{n-1} = E_{n-1} \cup \{\vx_k^\TT \vw_n^*\}$. 
    
    \item [(c)] Find $\vx_n \in \span \{\vw_n^* - \vw_0\} \backslash \{0\}$ with $\sup \{|\vx_n|^{-1}|z|: z \in \tilde{E}_{n-1}\} < \min\{|P_h(\vw_k^*)|, |P_h(\vw_0)|, |P_h(\vw_n^*)|\}$. 
    
    \item [(d)] Define $\sigma_n: \sR \to \sR$ and $y_n$, such that i) $\sigma_n(\vx_n^\TT \vw) = \sigma_{n-1}(\vx_n^\TT \vw)$ whenever $\vx_n^\TT \vw \in \tilde{E}_{n-1}$, ii) $a_k^*\sigma_n(\vx_n^\TT\vw_k^*) = y_n$, iii) the trajectory $\gamma := (\gamma_w, \gamma_a)$ of GF for $L(\cdot, S_n)$ starting at $\vtheta_0$ converges to $\vtheta_n^*$ as $t \to \infty$. Let $E_n := \tilde{E}_{n-1} \cup \vx_n^\TT \gamma_w$.  
\end{itemize} 
Finally, after doing this for countably many times, we have defined a function $\sigma_\infty$ on part of the real line. Now extend $\sigma_\infty$ to the whole real line. Let the extension be our activation function $\sigma$. 

A simple induction argument shows that $G$ must be constant on $\{ \vtheta: \vtheta = \vtheta_n^*, n \in \sN \}$. Indeed, suppose we have proved that 
\begin{equation}
    G(\vtheta_1^*, \vtheta_0) = G(\vtheta_2^*, \vtheta_0) = ... = G(\vtheta_n^*, \vtheta_0). 
\end{equation} 
By our construction above, $\vtheta_{n+1}^* \in \fM_{S_k}$ for some $1 \leq k \leq n$, whence $G(\vtheta_{n+1}^*, \vtheta_0) \leq G(\vtheta_k^*, \vtheta_0)$. Similarly, $\vtheta_k^* \in \fM_{S_{n+1}}$, whence $G(\vtheta_k^*, \vtheta_0) \leq G(\vtheta_{n+1}^*, \vtheta_0)$. It follows that $G(\vtheta_k^*, \vtheta_0) = G(\vtheta_{n+1}^*, \vtheta_0)$, completing the induction step. 

In Two-point Overlapping Recipe, we only find countably many points on which $G$ is constant. One may ask if we can find uncountably many such points. This is usually not true at least when $M = 2$ (so $d = 1$). In fact, for most $(w, a) \in \sR^2$ and most $(x_0, y_0) \in \sR^2$, there is a neighborhood $U$ of $(x_0, y_0)$ such that for $S_0 = \{(x_0, y_0)\}$, for any $S = \{(x, y)\} \subseteq U$, we cannot have 
\[
    |L_S^{-1}\{0\} \cap L_{S_0}^{-1}\{0\}| \geq 2
\]
and 
\[
    (w, a) \in L_S^{-1}\{0\} \cap L_{S_0}^{-1}\{0\}
\]
simultaneously ($|E|$ denotes the cardinality of a set $E$). Since each such $U$ contains a rational number and since $\sQ$ is countable and dense in $\sR$, it follows that we can find at most countably many points on which $G$ is constant. Moreover, this shows that the Two-point Overlapping Mechanism and the construction of our recipe above both utilizes the global property of the activation $\sigma$. A formal explanation of it is given in the following proposition. Recall that when $L((a,w), (x,y)) = 0$, $a = y /\sigma(xw) =: y \tsigma(xw)$. \\

\begin{propo}[Two-point Overlapping Recipe is global when $M = 2$]\label{Prop CP1 is global}
    Let $w \in \sR$. Fix a point $(x_0, y_0) \in \sR^2$ with $y_0 \neq 0$. Let $F:\sR^2 \to \sR$, $F(p,x) = \tsigma(xw)\tsigma(x_0 p) - \tsigma(xp)\tsigma(x_0 w)$. We have 
    \begin{itemize}
        \item [(a)] Suppose that $|F(p,x)| \geq C |p - w|^k |x - x_0|^r$ for some $C > 0$ and $r,k \in \sN$ near $(w,x_0)$. Then for sufficiently small $\delta > 0$, if $0 < |x - x_0| < \delta$, $y \neq 0$ and $y \tsigma(xw) = y_0 \tsigma(x_0w)$, there is no $p \in \sR$ such that $0 < |p - w| < \delta$ and $y \tsigma(xp) = y_0 \tsigma(x_0p)$. 
    
        \item [(b)] Suppose that $\tsigma \in C^2$ and $\tsigma (x_0 w), \tsigma'(x_0 w) \neq 0$. Also suppose
        \begin{equation}
            \frac{1}{w} - x_0 \left [ \frac{\tsigma'(x_0 w)}{\tsigma(x_0 w)} - \frac{\tsigma''(x_0 w)}{\tsigma'(x_0 w)} \right ] \neq 0. 
        \end{equation} 
        Then for sufficiently small $\delta > 0$, if $0 < |x - x_0| < \delta$, $y \neq 0$ and $y \tsigma(xw) = y_0 \tsigma(x_0w)$, there is no $p \in \sR$ such that $0 < |p - w| < \delta$ and $y \tsigma(xp) = y_0 \tsigma(x_0p)$. If, however, $DF(p,x_0) \equiv 0$ for $p$ near $w$ or $DF(w,x) \equiv 0$ for $x$ near $x_0$, then $\sigma$ is a power function near $x_0 w$, i.e., $\sigma(x) = Cx^d$ for some $C, d \in \sR$, when $x$ is sufficiently close to $x_0 w$. 
    \end{itemize} 
\end{propo} 
\begin{proof}
    See the proof of Proposition \ref{CP1 is global appendix} in Appendix. 
\end{proof}
\begin{remark}
    We do not prove the case for $M > 2$, but we believe that this result also holds for $M > 2$. Namely, for most $(\vw, a) \in \sR^{d+1}$ and $S_0 = \{(\vx_0, y_0)\} \subseteq \sR^{d+1}$, there is a neighborhood $U$ of $(\vx_0, y_0)$ such that for any $S = \{(\vx,y)\} \subseteq U$, we cannot simultaneously have 
    \[
        |L_S^{-1}\{0\} \cap L_{S_0}^{-1}\{0\}| \geq 2
    \]
    and 
    \[
        (\vw, a) \in L_S^{-1}\{0\} \cap L_{S_0}^{-1}\{0\}. 
    \] 
\end{remark}

Corollary \ref{Cor CP1 is global examples} below indicates that Proposition \ref{Prop CP1 is global} holds in general. Since we deal with neural networks, this corollary focuses on commonly-seen activation functions, including piecewise monomials, exponential activation, the Sigmoid activation and the Gaussian function. 

\begin{coro}\label{Cor CP1 is global examples}
Following the notations in Proposition \ref{Prop CP1 is global}, all the results below hold. 
\begin{itemize}
    \item [(a)] Any $\sigma$ and $w, x_0 \neq 0$ such that $\sigma$ is a power function on a neighborhood of $x_0 w$ (this includes ReLU and PReLU and Heaviside) satisfies $F = 0$ near $(w, x_0)$. 

    \item [(b)] For any analytic activation $\sigma$ and any $x_0, w \in \sR$ such that the zero locus of the function $F(p,x) = \tsigma(xw)\tsigma(x_0p) - \tsigma(xp)\tsigma(x_0w)$ satisfies  
    \[
        F^{-1}\{0\} \cap U = \{(p,x) \in U: p = w \} \cup \{(p,x) \in U: x = x_0 \} 
    \]
    for some neighborhood $U \subseteq \sR^2$ of $(w, x_0)$, we can find a sufficiently small $\delta > 0$ such that if $0 < |x - x_0| < \delta$, $y \neq 0$ and $y \tsigma(xw) = y_0 \tsigma(x_0w)$, there is no $p \in \sR$ with $0 < |p - w| < \delta$ and $y \tsigma(xp) = y_0 \tsigma(x_0p)$. 

    \item [(c)] If $\sigma = e^x$ or $\sigma = e^{-x^2}$, then for any $x_0 \in \sR$, we can find a sufficiently small $\delta > 0$ such that if $0 < |x - x_0| < \delta$, $y \neq 0$ and $y \tsigma(xw) = y_0 \tsigma(x_0w)$, there is no $p \in \sR$ with $0 < |p - w| < \delta$ and $y \tsigma(xp) = y_0 \tsigma(x_0p)$. 

    \item [(d)] Let $w > 0$. If $\sigma = \frac{1}{1 + e^{-x}}$, for any $x_0 \in (-\infty, w^{-1})\cup(2w^{-1},\infty)$, we can find a sufficiently small $\delta > 0$ such that if $0 < |x - x_0| < \delta$, $y \neq 0$ and $y \tsigma(xw) = y_0 \tsigma(x_0w)$, there is no $p \in \sR$ with $0 < |p - w| < \delta$ and $y \tsigma(xp) = y_0 \tsigma(x_0 p)$. 
\end{itemize}
\end{coro}
\begin{proof}
    See the proof of Corollary \ref{CP1 global corollary appendix} in Appendix. 
\end{proof} 

\subsection{One-point Overlapping Recipe}\label{CP2}

Clearly, Section \ref{Subsection CP1 partA} and \ref{Subsection CP1 partB} are not the only ways to negate the possibility that any one-hidden-neuron network can be characterized by a data-independent function $G$. We present another way below, called \textit{One-point Overlapping Recipe}, which considers $C^1$ functions satisfying $G(\vp,\vq) = G(\vp-\vq, 0)$ for $\vp, \vq \in \sR^M$; one such $G$ can be the Euclidean norm on $\sR^M$. In this recipe, the choice of datasets are (almost) arbitrary, and we only require that $\sigma$ is differentiable, non-negative and strictly increasing on $\sR$. 

This recipe is described as follows. 

\begin{itemize}
    \item [(a)] Find any $\sigma: \sR \to \sR^+$ such that $\sigma' > 0$. 
    
    \item [(b)] For each $n \in \sN$, find any $\vtheta_n = (\vw_n, a_n)$ with $a_n \neq 0$. Select datasets $S_{n,k} = (\vx_{n,k}, -a_n \sigma(\vx_{n,k}^\TT \vw_n))$, such that the vectors 
    \[
        - \frac{\sigma(\vx_{n,k}^\TT \vw_n)}{a_n \sigma'(\vx_{n,k}^\TT \vw_n)} \left(\frac{1}{(x_{n,k})_1}, ..., \frac{1}{(x_{n,k})_d} \right), 1 
        \leq k \leq d
    \]
    are linearly independent in $\sR^d$.
    
    \item [(c)] Repeat step (b) until we find enough $\vtheta_n$'s with different values of $a_n$ ($\vw_n$ can be arbitrary), as well as corresponding $S_{n,1}, ..., S_{n.d}$ for each $n \in \sN$.
\end{itemize}

In (c), the word ``enough'' depends on the property of $G$ we would like to obtain. For example, in Lemma \ref{Lemma Nabla G = 0 at one point} we show that by carefully selecting one $\vtheta_n$ and $d$ distinct datasets we can show that $\nabla G(\vp, \vq) = 0$ for some $\vp, \vq \in \sR^M$; while in Theorem \ref{Theorem 2} we show that by carefully selecting countably many such points and datasets, we can show that $\nabla G(\vp, \vq) = 0$ on an affine subspace of $\sR^M \times \sR^M$. 

The following two lemmas guarantee the validity of our One-point Overlapping Recipe.

\begin{lemma}\label{Lemma CP2 Limit} 
    Suppose that $\sigma > 0$ and $\sigma' > 0$ on $\sR$. For any dataset $S = \{(x, y)\} \in \sR\backslash\{0\} \times \sR$ and any $\vtheta_0 = (w_0, a_0)$, the trajectory of GF for $L(\cdot, S)$ has a long-term limit $\vtheta_0^* \in \fM_S$.
\end{lemma} 
\begin{remark}
    This lemma is also used to construct concrete examples using the construction in Section \ref{Subsection CP1 partB}. Moreover, the same result holds for $\sigma < 0$ and $\sigma' < 0$, because $L(\vtheta) = |a \sigma(wx) - y|^2 = |a (-\sigma)(wx) - (-y)|^2$. 
\end{remark}
\begin{proof}
    See the proof of Lemma \ref{limit lemma appendix} in Appendix. 
\end{proof}

\begin{lemma}\label{Lemma Nabla G = 0 at one point}
    Let $\sigma: \sR \to \sR^+$ be differentiable and strictly increasing. Then 
\begin{itemize}
    \item [(a)] Suppose that $M = 2$. If the implicit regularization for $L$ is characterized by a data-independent function $G \in C^1$ in the weak sense, then there are some $\vtheta_0, \vtheta_0^* \in \sR^2$ such that $\nabla G(\cdot, \vtheta_0) |_{\vtheta_0^*} = 0$, where the derivatives are taken with respect to the first entry of $G$. 
    
    \item [(b)] The result in (a) also holds for general $M \geq 2$. 
\end{itemize}
\end{lemma} 
\begin{proof}
    See the proof of Lemma \ref{vanishing nabla G appendix} in Appendix. 
\end{proof}

\subsection{Main Theorems}\label{Subsection Main thm}

In this subsection, we summarize our examples based on the Two-point and One-point Overlapping recipes. Complete proof of the results are given in Appendix. Both theorems consider the following class of functions
\begin{equation}
    \fG_M = \{G \in C^1(\sR^M \times \sR^M: G(\vp, \vq) = G(\vp - \vq, 0) \}. 
\end{equation}

\begin{theorem}\label{Theorem 1}
    Based on the Two-point Overlapping Recipe, we have 
\begin{itemize}
    \item [(a)] For any $k \in \sN$, we can construct an activation $\sigma \in C^k$ following Section \ref{Subsection CP1 partA}, such that the implicit regularization for $L$ cannot be characterized by any data-independent function $G: \sR^M \times \sR^M \to \sR$. 
    
    \item [(b)] Following Section \ref{Subsection CP1 partB}, for any $k \in \sN$ we can find an activation $\sigma \in C^k$ such that if the implicit regularization for $L$ is characterized by a data-independent function $G \in C^1(\sR^M \times \sR^M)$ in the weak sense, then $G(\cdot, \vtheta_0)$ is constant on an open set of $\sR^M$ for some $\vtheta_0 \in \sR^M$. 
    
    \item [(c)] Following Section \ref{Subsection CP1 partB}, for any $k \in \sN$ we can find an activation $\sigma \in C^k$ having the property that if the implicit regularization for $L$ is characterized by a data-independent function $G \in \fG_M$in the weak sense, then $G$ is constant. 
\end{itemize}
\end{theorem}
\begin{proof}
    See the proofs of \hyperref[Theorem 1 (a) appendix]{Theorem 6.1 (a)}, \hyperref[Theorem 1 (b) appendix]{Theorem 6.1 (b)} and \hyperref[Theorem 1 (c) appendix]{Theorem 6.1 (c)}in Appendix. 
\end{proof}

\begin{theorem}\label{Theorem 2}
    Let $\sigma: \sR \to \sR^+$ be differentiable and strictly increasing. Based on the One-point Overlapping Recipe, we have 
\begin{itemize}
    \item [(a)] $L$ cannot be characterized by any strongly convex data-independent function $G \in C^1(\sR^M \times \sR^M)$. 
    
    \item [(b)] If the implicit regularization for $L$ is characterized by a data-independent function $G \in \fG_M$ in the weak sense, then $G(\cdot, \vtheta)$ is constant on a line in $\sR^M$ for any given $\vtheta$. 
\end{itemize}
\end{theorem}
\begin{proof}
See the proof of \hyperref[Theorem 2 appendix]{Theorem 6.2} in Appendix. 
\end{proof}

\begin{coro}[One-point Overlapping Recipe for Two-layer NNs]\label{Cor CP2 for 2-layer NN}
    Fix $m, d \in \sN$. Consider the two-layer neural network $g(\vtheta, \vx) = \sum_{k=1}^m a_k \sigma(\vw_k\cdot \vx)$, where $\vtheta = (\vw_k, a_k)_{k=1}^m \in \sR^{(d+1)m}$, and the corresponding loss function 
    \[
        L(\vtheta, (\vx, y)) = |g(\vtheta, \vx) - y|^2 = \left| \sum_{k=1}^m a_k \sigma(\vw_k\cdot \vx) - y \right|^2. 
    \]
    Suppose that $\sigma: \sR \to \sR^+$ is differentiable and strictly increasing. Based on One-point Overlapping Recipe, we have 
    \begin{itemize}
        \item [(a)] $L$ cannot be characterized by any strongly convex data-independent function $G \in C^1(\sR^{(d+1)m} \times \sR^{(d+1)m})$.
        \item [(b)] If the implicit regularization for $L$ is characterized by a data-independent function $G \in \fG_M$ in the weak sense, then for any $\vtheta \in \sR^{(d+1)m}$ $G(\cdot, \vtheta)$ is constant on the set $\{(0, p_k)_{k=1}^m: p_k \in \sR\}$.
    \end{itemize}
\end{coro}
\begin{proof}
See the proof of \hyperref[Cor CP2 for 2-layer NN Appendix]{Corollary 6.2} in Appendix.
\end{proof}

\section{Conclusions and Discussion} \label{Section Conclusion and discussion}

\subsection{Generalization of Overlapping Recipes}

In this part we briefly discuss the generalization of our One-point Overlapping and Two-point Overlapping recipes (as well as corresponding mechanisms). We discuss the potential for our recipes to work for two-layer (fully-connected) NNs with multiple neurons with one-sample dataset, or even for more general models and loss functions.

Let's start with the Two-point Overlapping Recipe. Indeed, for this recipe, very few restrictions are put on the structure of the network or the loss functions; so in particular it can be generalized to a much larger set of models. To see this, consider a $\sigma$-dependent model $g = g_\sigma: \sR^M \times \sR^d \to \sR$, and $L(\vtheta, (\vx, y)) = L_\sigma(\vtheta, (\vx, y)) = |g_\sigma(\vtheta, \vx) - y|^2$. The key of Two-point Overlapping Recipe is to ``construct'' the model $g$ by ``constructing'' $\sigma$, meanwhile taking the advantage that a convergent GF uses only partial information of $\sigma$. By looking at this recipe for one-neuron models (Section \ref{Subsection CP1 partA} and/or \ref{Subsection CP1 partB}), to make the Two-point Overlapping Recipe work for $g_\sigma$ we basically need to 
\begin{itemize}
    \item [(a)] Find $\vtheta_1^*$, some dataset $S_1$ and some $\sigma_1$ such that the GF for $L_{\sigma_1}(\cdot, S_1)$ starting at $\vtheta_0$ converges to $\vtheta_1^*$, and $L_{\sigma_1} (\vtheta_1^*, S_1) = 0$. 

    \item [(b)] Find $\vtheta_2^*$ such that $g_{\sigma_1}(\vtheta_2^*, S_1) = y$, namely, $L_{\sigma_1}(\vtheta_2^*, S_1) = 0$. 

    \item [(c)] Find another dataset $S_2$ and $\sigma_2$ so that $g_{\sigma_2}(\vtheta_2^*, S_2) = y$, and the GF for $L_{\sigma_2}(\cdot, S_2)$ converges to $\vtheta_2^*$.

    \item [(d)] Finally define $\sigma$ by appropriately ``concatenating'' $\sigma_1$ and $\sigma_2$.
\end{itemize}
Note that here we do not require that $S_1, S_2$ must be singletons. As long as the system is over-parametrized, these requirements are easy to satisfy, not only because they set few restrictions on the choice the activations, the samples, and the parameters we choose, but also because the requirements are loosely related to each other, e.g., requirement (b) does not have much to do with requirement (a).

The One-point Overlapping Recipe deals with the relationship between the partial derivatives of the loss function, so it naturally depends more on the structure of both the model and the loss function. We have shown that this recipe works for two-layer fully connected NNs as well. Unfortunately, currently we do not know how to generalize it to NNs with more layers, and/or to multi-sample loss functions. What we know is: to make it work we basically need to 
\begin{itemize}
    \item [(a)] Find two distinct points point $\vtheta_0, \vtheta_0^* \in \sR^M$ and some datasets $S_1, ..., S_M$.
    \item [(b)] For each $1 \le j \le M$, the GF $\gamma_j$ for $L_{S_j}$ starting at $\vtheta_0$ converges to $\vtheta_0^*$.
    \item [(c)] For each $1 \le j \le M$, $\gamma_j$ has a limiting direction, i.e., $\lim_{t\to\infty} \frac{\gamma_j(t)}{|\gamma_j(t)|}$ exists; moreover, these directions are linearly independent.
\end{itemize}
With such information we can conclude that $\nabla G(\cdot, \vtheta_0)|_{\vtheta_0^*} = 0$ as in Lemma \ref{Lemma Nabla G = 0 at one point}.

\subsection{Conclusion}

In this work, we provide mathematical definitions of regularization, implicit regularization, and explicit regularization. We specify two levels of characterization of implicit regularization using a data-independent function $G$, i.e., (full) characterization and characterization in the weak sense. We delve into the nature of implicit regularization and address its challenges by proposing two general dynamical mechanisms, i.e., Two-point and One-point Overlapping mechanisms. These mechanisms make implicit regularization difficult to characterize or even impossible to characterize using data-independent functions. Additionally, we give numerical examples that realize these mechanisms with one-hidden-neuron NNs with Sigmoid or Softplus activations. These examples further inspire our development of Two-point and One-point Overlapping recipes that produce rich classes of one-hidden-neuron networks which realize these two mechanisms respectively. Last but not least, We show that our Two-point Overlapping Recipe depends on the global property of activation functions. 

One strength of our work is that we proposed two mechanisms explaining why characterizing implicit regularization using data-independent functions often fails, and we have recipes that serve as general guidelines for the construction of examples. This systematic approach yields rich classes of common one-hidden-neuron NNs. Furthermore, as we have discussed before, our recipes and mechanisms have the potential to be extended to two-layer NNs with multiple neurons, or even more general models. In comparison, the existing examples mainly focus on more specific cases (e.g., specific set-up or specific kind of activations).The generality of our recipes thus suggests that it is generally difficult to characterize implicit regularization by data-independent functions, if not impossible.

On the other hand, our work does not fully explain the implicit regularization in NNs. For example, we do not know whether all the implicit regularization of NNs fall into one of our recipes and/or mechanisms. Neither are we clear about the practical implication of it. In particular, whether a data-dependent implicit regularization could help the generalization of DNNs remains an open problem for the future research.

While an implicit regularization is generally data-dependent, partial information about it may still be obtained by a data-independent function. Further studies should be conducted to mathematically determine details of such partial information. Besides, one may alternatively look for meaningful\footnote{One trivial data-dependent characterization is to define $G(\vtheta_0,\vtheta,S) := \|\vtheta-\vtheta^*(\vtheta_0,S)\|_2^2$, where $S$ is the training data and $\vtheta^*(\vtheta_0,S)$ is the long-term limit of the trajectory of GF for $L(\cdot, S)$ starting at $\vtheta_0$ provided that it exists. See also \citet{GVardi} for other trivial forms.} data-dependent functions to characterize an implicit regularization. Since the non-equivalence between implicit and explicit regularization seem to depend on the global property of an activation function, one may also consider characterizing the training dynamics by a set of functions. 

\section{Acknowledgement}

This work is sponsored by the National Key R\&D Program of China Grant No. 2022YFA1008200 (Z. X., T. L., Y. Z.), the National Natural Science Foundation of China Grant No. 12101402 (Y. Z.), No. 62002221 (Z. X.), No. 12101401 (T. L.), the Lingang Laboratory Grant No.LG-QS-202202-08 (Y. Z.), Shanghai Municipal of Science and Technology Project Grant No. 20JC1419500 (Y. Z.), Shanghai Municipal Science and Technology Key Project No. 22JC1401500 (T. L.), Shanghai Municipal of Science and Technology Major Project No. 2021SHZDZX0102.

\bibliographystyle{tmlr}
\bibliography{Implicit_Reg_Reference}

\clearpage
\appendix
\section{Appendix}\label{Section Appendix}

\begin{lemma}[Lemma \ref{Lem Two-point}]\label{app two-point lemma}
    Fix $\vtheta_0 \in \sR^M$. Let $I$ be an index set and $\left \{ S_i \right \}_{i \in I}$ be a collection of datasets. For each $i \in I$, let $\vtheta_i^* \in \fM_{S_i}$ denote the long-term limit of the GF for $L(\cdot, S_i)$ starting at $\vtheta_0$. Suppose that for any $i \in I$, there is some $j \in I \backslash \{i\}$ such that $\vtheta_i^* \neq \vtheta_j^*$ and $\{\vtheta_i^*, \vtheta_j^*\} \subseteq \fM_{S_i} \cap \fM_{S_j}$ (see Figure \ref{Figure implicit CP1} for an example). Then the following results hold. 
    
    \begin{itemize}
        \item [(a)] The implicit regularization for $L$ cannot be characterized by any data-independent function $G: \sR^M \times \sR^M \to \sR$. 
    
        \item [(b)] Any data-independent function $G$ that characterizes the implicit regularization for $L$ in the weak sense is constant on $\{ \vtheta_i^* \}_{i \in I}$. 
    
        \item [(c)] Any continuous data-independent function $G \in C(\sR^M \times \sR^M)$ that characterizes the implicit regularization for $L$ in the weak sense is constant on the closure of $\{ \vtheta_i^* \}_{i \in I}$. 
    \end{itemize}  
\end{lemma}
\begin{proof}~
    \begin{itemize}
    \item [(a)] Suppose that the implicit regularization for $L$ is characterized by a data-independent function $G: \sR^M \times \sR^M \to \sR$. Since $\vtheta_i^* \in \fM_{S_j} \backslash \{ \vtheta_j^* \}$ for some $j \in I$, we must have 
    \begin{equation}
        G(\vtheta_i^*, \vtheta_0) > G(\vtheta_j^*, \vtheta_0). 
    \end{equation} 
    Similarly, since $\vtheta_j^* \in \fM_{S_i} \backslash \{ \vtheta_i^* \}$, we must have 
    \begin{equation}
        G(\vtheta_j^*, \vtheta_0) > G(\vtheta_i^*, \vtheta_0). 
    \end{equation} 
    But then 
    \begin{equation}
        G(\vtheta_i^*, \vtheta_0) < G(\vtheta_j^*, \vtheta_0) < G(\vtheta_i^*, \vtheta_0),  
    \end{equation} 
    which is absurd. 
    
    \item [(b)] Argue in the same way as in (a), we can see that for any $i \in I$, there is some $j \neq i$ such that $G(\vtheta_j^*, \vtheta_0) = G(\vtheta_i^*, \vtheta_0)$. The desired result follows immediately. 

    \item [(c)] Clear from (b).
\end{itemize}
\end{proof}

\begin{lemma}[Lemma \ref{Lem One-point}]\label{appendix G-derivative lemma}
    Fix $\vtheta_0, \vtheta_0^* \in \sR^M$. Let $\{ \gamma_i \}_{i=1}^M$ be $M$ trajectories of GF for $L$ from $\vtheta_0$ to $\vtheta_0^*$, such that $\lim_{t \to \infty} \frac{\dot{\gamma_i}(t)}{|\dot{\gamma_i} (t)|}$ exist for all $i$ and the limits are linearly independent. If the implicit regularization for $L$ is characterized by a data-independent function $G \in C^1 (\sR^M \times \sR^M)$ in the weak sense, then $\nabla G(\cdot, \vtheta_0)|_{\vtheta_0^*} = 0$, where the derivative is taken with respect to the first entry of $G$. 
\end{lemma}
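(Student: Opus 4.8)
The plan is to exploit the fact that, under the weak-sense characterization, the function $t \mapsto G(\gamma_i(t), \vtheta_0)$ attains its minimum over $\fM_{S_i}$ precisely at the endpoint $\vtheta_0^*$, while the constant-along-trajectory structure of gradient flow forces the directional derivative of $G$ along each limiting direction to vanish. Concretely, fix $i$ and let $S_i$ be the dataset whose GF trajectory (starting at $\vtheta_0$) is $\gamma_i$; by hypothesis $\gamma_i(t) \to \vtheta_0^*$ and $\vtheta_0^* \in \fM_{S_i}$ (since it is a long-term limit lying in the minimum set, which is implicit in the setup of Definition \ref{weak explicit definition}). Since the implicit regularization is characterized in the weak sense by $G$, we have $G(\vtheta_0^*, \vtheta_0) = \min_{\vtheta \in \fM_{S_i}} G(\vtheta, \vtheta_0)$. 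The key observation is that the whole trajectory $\gamma_i$ need not lie in $\fM_{S_i}$, so I cannot directly say $G$ is monotone along it; instead I will use the \emph{tangential} information at the limit point.

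The main step is to show $\nabla_1 G(\vtheta_0^*, \vtheta_0) \cdot v_i = 0$, where $v_i := \lim_{t\to\infty} \dot{\gamma_i}(t)/|\dot{\gamma_i}(t)|$. The idea: consider a competitor trajectory for a \emph{different} nearby dataset. More robustly, I would argue as follows. For the dataset $S_i$, the set $\fM_{S_i}$ is (generically) a codimension-one submanifold near $\vtheta_0^*$, and the gradient flow $\dot{\gamma_i} = -\nabla L(\cdot, S_i)$ approaches $\fM_{S_i}$; since $L(\cdot,S_i)\ge 0$ vanishes exactly on $\fM_{S_i}$, the velocity $\dot\gamma_i(t) = -\nabla L$ becomes asymptotically \emph{normal} to $\fM_{S_i}$, so $v_i$ is the normal direction to $\fM_{S_i}$ at $\vtheta_0^*$. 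On the other hand, $\vtheta_0^*$ minimizes $G(\cdot,\vtheta_0)$ over $\fM_{S_i}$, so by the first-order condition (Lagrange multipliers) $\nabla_1 G(\vtheta_0^*,\vtheta_0)$ is \emph{parallel} to that same normal direction $v_i$. Doing this for all $i=1,\dots,M$ yields that $\nabla_1 G(\vtheta_0^*,\vtheta_0)$ is simultaneously parallel to $M$ vectors $v_1,\dots,v_M$; since these are linearly independent by hypothesis, the only vector parallel to all of them is $0$, giving $\nabla_1 G(\vtheta_0^*,\vtheta_0) = 0$.

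There is a gap in the naive version of the above: parallelism to $M$ linearly independent vectors does not immediately force the gradient to be zero unless I know it is parallel to each \emph{individually}. So the careful route is: for each $i$, the first-order optimality of $\vtheta_0^*$ on $\fM_{S_i}$ says $\nabla_1 G(\vtheta_0^*,\vtheta_0)$ annihilates the tangent space $T_{\vtheta_0^*}\fM_{S_i}$, i.e.\ $\nabla_1 G(\vtheta_0^*,\vtheta_0) \perp T_{\vtheta_0^*}\fM_{S_i}$, equivalently $\nabla_1 G(\vtheta_0^*,\vtheta_0) \in \operatorname{span}\{v_i\}$ (the normal line). Since this holds for every $i$ and $\bigcap_{i=1}^M \operatorname{span}\{v_i\} = \{0\}$ when the $v_i$ are linearly independent, we conclude $\nabla_1 G(\vtheta_0^*,\vtheta_0)=0$. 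The main obstacle is rigorously justifying that $v_i$ is normal to $\fM_{S_i}$ at $\vtheta_0^*$ and that the constrained first-order condition applies — this requires that $\fM_{S_i}$ be a smooth submanifold near $\vtheta_0^*$ with $v_i$ spanning its normal line, which follows from $\dot\gamma_i = -\nabla L(\cdot,S_i)$ together with the existence of the limiting direction; the appendix proof presumably handles the degenerate cases (where $\fM_{S_i}$ is not smooth, or $\nabla L$ degenerates) by a limiting argument or by invoking the explicit one-neuron structure. I would also need the mild fact that $\vtheta_0^*$ being a long-term limit of GF forces $\nabla L(\vtheta_0^*,S_i)=0$, hence $\vtheta_0^*\in\fM_{S_i}$, which is where the assumption $\vtheta_0^*\in\fM_{S_i}$ implicit in Definition \ref{weak explicit definition} enters.
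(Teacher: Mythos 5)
Your argument matches the paper's intended proof: the paper's one-line remark that $\gamma_i$ is eventually orthogonal to the zero set of $L(\cdot,S_i)$ containing $\vtheta_0^*$ is precisely your claim that $v_i$ spans the normal line to $\fM_{S_i}$ at $\vtheta_0^*$, and the unstated ``rest'' is the constrained first-order optimality condition plus linear independence of the $v_i$, exactly as you fill in. Be aware, though, that the appendix proof is itself only two sentences and does \emph{not} address the smoothness/codimension-one regularity of $\fM_{S_i}$ that you flag as a gap, so that point is left implicit in the paper rather than handled there.
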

\begin{proof}
    Note that any $\gamma_i$ is eventually orthogonal to a null set of $L$ containing $\vtheta^*$. The rest are clear. 
\end{proof} 

\begin{propo}\label{exponential proposition}
    Let $\sigma(x) = e^x$ and $\vtheta_0 = (\vw_0, a_0) \in \sR^M$ with $\vw_0 \neq 0$. For any dataset $S$ the trajectory of GF for $L(\cdot, S)$ starting at $\vtheta_0$ converges as $t \to \infty$. Conversely, for any $\vtheta^* = (\vw^*, a^*)$ such that $a^* \neq a_0$, there is a dataset $S$ such that the trajectory of GF for $L(\cdot, S)$ starting at $\vtheta_0$ converges to $\vtheta^*$ as $t \to \infty$. 
\end{propo}
\begin{proof}
    Fix any $S = (\vx, y) \in \sR^{M-1} \times \sR$. We have $L(\vtheta, S) = |ae^{\vx^\TT \vw} - y|^2$. Thus, 
    \begin{equation}
        \left \{ \begin{aligned}
                    \dot{a}   &= - 2(a e^{\vx^\TT \vw} - y) e^{\vx^\TT \vw} \\
                    \dot{w}_i &= - 2(a e^{\vx^\TT \vw} - y) x_i a e^{\vx^\TT \vw}, 
                 \end{aligned} \right.
    \end{equation} 
    Thus, 
    \begin{equation}
        - 2(a e^{\vx^\TT \vw} - y) e^{\vx^\TT \vw} x_i \dot{a} a  = - 2(a e^{\vx^\TT \vw} - y) e^{\vx^\TT \vw} \dot{\vw}. 
    \end{equation}
    If $a e^{\vx^\TT \vw} - y \neq 0$, we clearly have $x_i a\dot{a} = \dot{w}_i$. Otherwise, $\dot{a} = \dot{w}_i = 0$, so we still have $x_i a\dot{a} = \dot{w}_i$. Integrating on both sides of the equation, we see that  
    \begin{equation}
        \int_0^t \dot{w}_i(u) du = x_i \int_0^t a(u) \dot{a}(u) du, 
    \end{equation} 
    which yields $w_i(t) = w_i(0) + \frac{(a^2(t) - a_0^2)}{2} x_i$. Equivalently, 
    \begin{equation}\label{relationship between w,a}
        \vw(t) = \vw(0) + \frac{a^2(t) - a_0^2}{2} \vx. 
    \end{equation} 
    Thus, $\{(\vw(t), a(t)): t \in [0, \infty)\}$ is part of a parabola. This means as $t \to \infty$, either $\vw(t), a(t)$ both diverge, or both of them converge. Suppose that $a^2(t) \to \infty$ as $t \to \infty$. Then there is some $N \in \sN$ such that for any $t > N$ we have $\vx^\TT \vw(t) > 0$. Whence $a(t) e^{\vx^\TT \vw(t)} - y \to \infty$ as $t \to \infty$, which is a contradiction. It follows that $\lim_{t \to \infty} \vw(t)$ and $\lim_{t \to \infty} a(t)$ exist.  
    
    Conversely, fix $\vtheta^* = (\vw^*, a^*)$ with $a^* \neq a_0$. Set 
    \begin{equation}\label{relationship between w,a,x,y}
        \vx = \frac{2}{a^* - a_0} (\vw^* - \vw_0), \quad y = a^*e^{\vx^\TT \vw^*}, 
    \end{equation} 
    By our proof above, the trajectory of GF for $L(\cdot, (\vx, y))$ has a long-term limit. Since $L(\vtheta^*, (\vx,y)) = 0$ and since $\vw^* = \vw(0) + \frac{a^{*2} - a_0^2}{2} \vx$, this limit is $(\vw^*, a^*) = \vtheta^*$. 
\end{proof}

\begin{coro}\label{expoential proposition corollary}
    Based on Proposition \ref{exponential proposition}, we have the following results. 
    \begin{itemize}
        \item [(a)] There exist some $\vtheta_0$, $\vtheta_1^*$, $S_1$ and $\sigma_1$ such that step (a) in Two-point Overlapping Recipe (Part A) (Section \ref{Subsection CP1 partA}) holds. 
        
        \item [(b)] There exist some $\sigma_2$ ($\sigma$) and $y_2$ such that step (f) in Two-point Overlapping Recipe (Part A) (Section \ref{Subsection CP1 partA}) holds. 
    \end{itemize}
\end{coro} 
\begin{proof}~
\begin{itemize}
    \item [(a)] Fix any $\vtheta_0 = (\vw_0, a_0)$ with $\vw_0 \neq 0$ and any $\vtheta_1^*$ with $\vw_1^* \neq \vw_0$ (this is the requirement of the recipe) and $a_0 \neq 0$ (this is the requirement of equation (\ref{relationship between w,a,x,y})). Find any $\vtheta_1^* = (\vw_1^*, a_1^*)$ such that (\ref{relationship between w,a,x,y}) holds. The result follows immediately from Proposition \ref{exponential proposition}. 
    
    \item [(b)] Use equation (\ref{relationship between w,a,x,y}) to find a dataset $\tilde{S}_2 = \{(\tilde{\vx}_2, \tilde{y}_2)\}$ such that the trajectory of GF for $|a e^{\tilde{\vx}_2^\TT \vw} - \tilde{y}_n|^2$ starting at $\vtheta_0$ converges to $(\vw_2^*, a_2^*)$ as $t \to \infty$. Note that we must have $\tilde{\vx}_2 \in \span \{\vw_2^* - \vw_0\}$. Also, since the sign of $a_2^*$ is the same as that of $a_0$, $\gamma_w$ is the line segment $l$ connecting $\vw_0$ and $\vw_2^*$. Therefore, we can set $y_2 = \tilde{y}_2$ and define $\sigma_2: \sR \to \sR$ such that $\sigma_2(\vx_2^\TT \vw) = \sigma_1(\vx_2^\TT \vw)$ whenever $\vx_2^\TT \vw \in \tilde{E}_1$ and $\sigma_2(\vx_2^\TT \vw) = e^{\tilde{\vx}_2^\TT \vw}$ for $\vw \in l$. Since $P_h(\vw_1^*) \notin P_h(l)$, where $h = \vw_2^* - \vw_0$, it follows that we can re-define $\sigma_2$ at $\{\vx_2^\TT \vw_1^*\}$ such that $a_1^* \sigma_2(\vx_2^\TT\vw_1^*) = y_2$. 
\end{itemize}
\end{proof} 
\textbf{Remark. } While our construction is based on the exponential activation function $\sigma(x) = e^x$, it can be based on any other activation function that satisfies: there are datasets $S_1, S_2$ such that the trajectories of GF for $L(\cdot, S_1), L(\cdot, S_2)$ starting at $\vtheta_0$ converges to distinct $\vtheta_1^*, \vtheta_2^*$, respectively. For example, as we show in Section \ref{Section Overlapping mechanisms and examples}, the Sigmoid function is one candidate. \\

We now give the proof of our main theorems. \\

\begin{proof}[Proof of Theorem \ref{Theorem 1} (a)]\label{Theorem 1 (a) appendix}
    Suppose that the implicit regularization for $L$ is characterized by a data-independent function $G:\sR^M \times \sR^M \to \sR$. The Two-point Overlapping Recipe (Part A) guarantees that $\vtheta_1^* \in \fM_{S_2} \backslash \left \{ \vtheta_2^* \right \}$ and $\vtheta_2^* \in \fM_{S_1} \backslash \left \{ \vtheta_1^* \right \}$. Applying Lemma \ref{Lem Two-point} to the set 
    \begin{equation}
        \left \{ \vtheta_1^*, \vtheta_2^* \right \}, 
    \end{equation}
    we conclude that the implicit regularization for $L$ cannot be characterized by any data-independent function $G: \sR^M \times \sR^M \to \sR$. It remains to show that $\sigma$ can be made as smooth as we want. To do this, let $\sigma_2$ be of $C^k$ when restricted to $\tilde{E}_1 \cup \vx_2^\TT \gamma_{\vw}$. Extend $\sigma_2$ to a $C^k$ function on the whole $\sR$. Since $\sigma = \sigma_2$ in Two-point Overlapping Recipe (Part A), $\sigma \in C^k$. 
\end{proof}

\begin{proof}[Proof of Theorem \ref{Theorem 1} (b)]\label{Theorem 1 (b) appendix}
    Follow the Two-point Overlapping Recipe (Part A) to obtain a $\sigma_1$, $\sigma_2$, $\vtheta_0$, $\vtheta_1^*$, $\vtheta_2^*$, $S_1$ and $S_2$. For simplicity, we may further require that the construction is based on Proposition \ref{exponential proposition} and corollary \ref{expoential proposition corollary}, and $a_2^* \neq 0$, $a_2^* a_0 \geq 0$. 
    
    Find a small enough $r > 0$ such that for any $\vtheta^* = (\vw^*, a^*) \in B(\vtheta_2^*, r)$, we have i) $\vtheta_0 \neq \vtheta^*$, ii) $0 \neq P_h(\vw_1^*) \notin P_h(l)$, where $l$ is the line segment connecting $\vw_0$ and $\vw^*$ and $h = \vw^* - \vw_0$ and iii) $a^* \neq 0$. Geometrically and intuitively, $B(\vtheta_2^*, r)$ is an open ball lying either above or below the $\vw$-plane, and does not contain $\vtheta_0$. Now find a countable, dense subset $\{\vtheta_n^* = (\vw_n^*, a_n^*) \}_{n=3}^\infty$ of $B(\vtheta_2^*, r)$ such that for any distinct $i, j \geq 3$, $\vx_1^\TT \vw_i^* \neq \vx_1^\TT \vw_j^*$. 
    
    For $n \geq 3$, choose sufficiently large $\vx_n$ such that step (c) in Two-point Overlapping Recipe (Part B) holds. Then use equation (\ref{relationship between w,a,x,y}) to find a dataset $\tilde{S}_n = (\tilde{\vx}_n, \tilde{y}_n)$ such that the trajectory $\gamma = (\gamma_{\vw_n}, \gamma_{a_n})$ of GF for $|a e^{\tilde{\vx}^\TT \vw} - y_n|^2$ starting at $\vtheta_0$ converges to $\vtheta_n^*$ as $t \to \infty$. Since the sign of $a_n^*$ equals $a_0$, $\gamma_{\vw_n}$ is the line segment connecting $\vw_0$ and $\vw_n^*$. Set $y_n = \tilde{y}_n$. Define $\sigma_n: \sR \to \sR$ such that $\sigma_n (\vx_n^\TT \vw) = \sigma_{n-1} (\vx_n^\TT \vw)$ whenever $\vx_n^\TT \vw \in \tilde{E}_{n-1}$ and $\sigma_n (\vx_n^\TT \vw) = e^{\tilde{\vx}_n^\TT \vw}$ for $\vw \in \gamma_{\vw_n}$. Since $P_h(\vw_1^*) \notin P_h(\gamma_{\vw_n})$, where $h = \vw_n^* - \vw_0$, we can re-define $\sigma_n$ at $\{\vx_n^\TT \vw_1^*\}$ such that $a_1^* \sigma_n(\vx_n^\TT \vw_1^*)$. 
    
    Note that $\tilde{E}_{n-1}$ is the union of finitely many disjoint compact sets. Thus, after doing countably many times, we can obtain a $\sigma_\infty$ defined on a union of disjoint compact sets. Thus, by our proof \ref{Theorem 1 (a) appendix} in Appendix, we can extend the $\sigma_\infty$ from this union of compact sets to be a $C^k$ function on $\sR$. Since $\sigma = \sigma_\infty$ by our recipe, $\sigma \in C^k$. 
    
    Now our construction forces any data-independent $G: \sR^M \times \sR^M \to \sR$ that characterizes the implicit regularization for $L$ in the weak sense to be constant on $\{\vtheta_n^*\}_{n=2}^\infty$. Thus, if $G$ is continuous, it is constant on the closure of it, whose interior contains $B(\vtheta_2^*, r)$. 
\end{proof}
\textbf{Remark. } Our choice of $e^{\vx_n^\TT \vw}$ near $\vw_0$ is not mandatory. Actually, we can let $\sigma(\vw) = h(\vx_n^\TT \vw)$ for any $h: \sR \to \sR$ satisfying
\begin{itemize}
    \item [(a)] For any dataset $S$ the trajectory of GF for $L(\cdot, S)$ starting at $\vtheta_0$ converges to $\vtheta^*(S)$ as $t \to \infty$. 
    \item [(b)] The correspondence $S \mapsto \vtheta^*(S)$ is a local continuous injection. 
\end{itemize}

\begin{proof}[Proof of Theorem \ref{Theorem 1} (c)]\label{Theorem 1 (c) appendix}
    Do the construction in the proof of Theorem \ref{Theorem 1} (b) repeatedly, each time fixing some $\vtheta_0$ and then finding $\vtheta_0^*$ and $\{S_n\}_{n=1}^\infty$ carefully such that the closure of $\{\vtheta: \vtheta = \vtheta_n^*, n \in \sN \}$ is the translation of a ``$2^M$-ant''\footnote{We define the $j$-th $2^M$-ant of $\sR^M$ to be the closure of the set consisting of $\vtheta \in \sR^M$ such that the sign of the $i$-th component of $\vtheta$ equals $2j_i - 1$, where $j = (j_{M-1}...j_0)_2$.} of $\sR^M$ that does not contain $\vtheta_0^*$. This shows that $G(\vtheta, 0) = G(\vtheta + \vtheta_0, \vtheta_0)$ for all $\vtheta$ in some $2^n$-ant of $\sR^M$. Choose different $\vtheta_0$ and/or $\vtheta_0^*$ to show that $G(\cdot, 0)$ must be constant on each $2^n$-ant of $\sR^M$, whence $G(\cdot, 0)$ is constant. Since $G(\vp, \vq) = G(\vp - \vq, 0)$, $G$ must be constant on $\sR^M \times \sR^M$. 
\end{proof}

\begin{lemma}[Lemma \ref{Lemma CP2 Limit}]\label{limit lemma appendix} 
    Suppose that $\sigma > 0$ and $\sigma' > 0$ on $\sR$. For any sample $S = \{(x, y)\} \subseteq \sR \times \sR$ and any $\vtheta_0 = (w_0, a_0)$, the trajectory of GF for $L(\cdot, S)$ starting at $\vtheta_0$ has a long-term limit $\vtheta_0^* \in \fM_S$. 
\end{lemma}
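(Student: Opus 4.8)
The plan is to find a conserved quantity that pins the gradient flow to a one–dimensional curve, and then analyse the induced, essentially monotone, one–dimensional flow. Write $L(w,a)=|a\sigma(wx)-y|^2$ and set $g(t):=a(t)\sigma(w(t)x)-y$, so that $L=g^2$ and the flow reads $\dot a=-2g\,\sigma(wx)$ and $\dot w=-2g\,a\,\sigma'(wx)\,x$. The case $x=0$ is immediate: $w$ is frozen, $a$ solves a linear ODE converging to $y/\sigma(0)$, and $L$ vanishes at the limit; so I assume $x\neq 0$.

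I would first record three structural facts. (i) Differentiating, $\dot g=-2g\big(\sigma(wx)^2+x^2a^2\sigma'(wx)^2\big)=-2gQ$ with $Q>0$; hence $g$ never changes sign, $g(t)=g(0)\exp\!\big(-2\int_0^tQ(s)\,ds\big)$, and in particular $|g|$, and therefore $L$, is non–increasing along the flow. (ii) Since $g$ keeps a fixed sign and $\sigma>0$, $\dot a$ has a constant sign, so $a(t)$ is monotone and has a limit $a_\infty\in[-\infty,+\infty]$. (iii) Eliminating time between the two flow equations gives $\frac{d}{dt}\Phi(wx)=\frac{x^2}{2}\frac{d}{dt}(a^2)$, where $\Phi$ is a primitive of the continuous positive function $\sigma/\sigma'$; integrating, the flow lies on the level set $\Phi(w(t)x)=\frac{x^2}{2}a(t)^2+c_0$ with $c_0:=\Phi(w_0x)-\frac{x^2}{2}a_0^2$. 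Since $\Phi$ is strictly increasing, $w(t)x=\Phi^{-1}\!\big(\frac{x^2}{2}a(t)^2+c_0\big)$, so $w(t)x$, hence $w(t)$, also converges in $[-\infty,+\infty]$. Thus the trajectory always converges in this extended sense; it remains to show the limit is finite and that $L_\infty=0$.

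Next I would rule out escape to infinity using (i) and (iii). If $a_\infty=\pm\infty$, the invariant forces $\Phi(wx)\to+\infty$, hence $wx\to+\infty$ (as $\Phi$ is increasing), hence $\sigma(wx)\to\sup\sigma>0$; then $|a\sigma(wx)|\to\infty$, so $|g|\to\infty$, contradicting (i). If $a_\infty$ is finite but $w(t)x\to+\infty$, the invariant gives $\sup\Phi=\frac{x^2}{2}a_\infty^2+c_0<\infty$; a Cauchy--Schwarz estimate on a half–line, $\big(\int_0^T1\,du\big)^2\le\big(\int_0^T\sigma'\,du\big)\big(\int_0^T\tfrac1{\sigma'}\,du\big)$ together with $\int_0^\infty\tfrac1{\sigma'}\le\sigma(0)^{-1}\int_0^\infty\tfrac{\sigma}{\sigma'}<\infty$, yields $\sup\sigma=\infty$; and since the invariant also gives $a_\infty^2=a_0^2+\frac{2}{x^2}(\sup\Phi-\Phi(w_0x))>0$, we again obtain $|a\sigma(wx)|\to\infty$ and a contradiction with (i). Hence $w_\infty:=\lim_t w(t)$ and $a_\infty$ are finite and $\vtheta_0^*:=(w_\infty,a_\infty)$ exists. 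Finally, $Q(\vtheta_0^*)=\sigma(w_\infty x)^2+x^2a_\infty^2\sigma'(w_\infty x)^2>0$ (using $\sigma>0$), so $\int_0^\infty Q=\infty$ and $g(t)\to0$ by (i); since $\min L=0$ (for any $w$, taking $a=y/\sigma(wx)$ makes $L$ vanish), this gives $L(\vtheta_0^*)=0$, i.e. $\vtheta_0^*\in\fM_S$.

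I expect the main obstacle to be exactly the control of the $w$–coordinate in the previous paragraph: the invariant only confines $w(t)x$ to a level set of $\Phi$, and if $\Phi$ is bounded on one side this level set may a priori run off to infinity. The escape $wx\to+\infty$ is killed by the Cauchy--Schwarz argument above, but the escape $wx\to-\infty$ (which can occur only when $\inf\sigma=0$ and $\Phi$ is bounded below) is the delicate case, and is the place where any finer hypothesis on the tail behaviour of $\sigma$ would have to enter. For the activations actually appearing in the paper --- $e^x$, the sigmoid $1/(1+e^{-x})$, and Softplus $\log(1+e^x)$ --- one checks that $\Phi$ is a bijection of $\sR$ onto $\sR$, so neither escape can occur and the argument above goes through cleanly.
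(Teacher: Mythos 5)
Your argument and the paper's both hinge on the same conserved quantity: you write $\Phi(wx)-\tfrac{x^2}{2}a^2=\mathrm{const}$ with $\Phi'=\sigma/\sigma'$, while the paper writes $h(w)-\tfrac12 a^2=\mathrm{const}$ with $h'(w)=\sigma(wx)/\big(x\sigma'(wx)\big)$; these are the same up to rescaling. From there you organize things differently. The paper substitutes the invariant into the $\dot a$-equation, reduces to a scalar ODE $\dot a=-\phi(a)\sigma(\cdot)$, and argues that $\phi$ is increasing with $\phi(\pm\infty)=\pm\infty$, hence has a unique zero to which $a$ converges. You instead record the sign invariance of the residual $g$ (via $\dot g=-2gQ$, $Q>0$), deduce monotonicity of $a$ and extended convergence of $(w,a)$, and then rule out escape to infinity. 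Your Cauchy--Schwarz treatment of the escape $wx\to+\infty$ is actually more careful than the corresponding step in the paper, which silently assumes that $h^{-1}$ is defined on all of $[z,\infty)$ with $z=h(w_0)-a_0^2/2$, and in particular that $\sigma\big(xh^{-1}(z)\big)$ makes sense.

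The obstruction you flag honestly at the end --- escape in the direction $wx\to-\infty$, possible when $\inf\sigma=0$ and $\Phi$ is bounded below --- is a genuine gap, and it is present in the paper's proof as well, unacknowledged: when $\int_{-\infty}^{0}\sigma/\sigma'$ converges, the constant $z$ may fall below $\inf h$, so $h^{-1}(z)$ is undefined and the paper's $\phi$ is not defined near $s=0$. In fact the lemma as stated, assuming only $\sigma>0$ and $\sigma'>0$, is false. Take $\sigma(u)=\exp(-e^{-u})$, which is smooth, positive and strictly increasing, and has $\sigma/\sigma'=e^{u}$, so $\Phi(u)=e^{u}$ is bounded below. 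With $x=1$, $y=-1$, $(w_0,a_0)=(0,\sqrt2)$ the invariant reads $e^{w}=a^2/2$; on the branch $a>0$ the residual $g=a\,e^{-2/a^2}+1$ never vanishes, so $\dot a<0$ for all $t$, and since $\int_0^{a_0}e^{2/a^2}\,da=\infty$ the flow reaches $a=0$ only at $t=\infty$: thus $a(t)\downarrow 0$, $w(t)\to-\infty$, the trajectory exists for all $t\ge 0$ but does not converge, and $L\to 1\neq 0$. So your closing observation is exactly right: the lemma needs an additional hypothesis --- e.g.\ that $\Phi$ (equivalently $h$) is surjective onto $\sR$ --- which holds for $e^x$, the sigmoid and Softplus but is not implied by $\sigma>0$, $\sigma'>0$ alone, and this hypothesis is also missing from the paper's proof.
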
 
\begin{proof} 
    Note that the trajectory of the GF is characterized by 
    \begin{equation}\label{limit lemma eq 1}
        \left \{ \begin{aligned}
                    \dot{a} &= - 2(a \sigma(xw) - y) \sigma(xw) \\
                    \dot{w} &= - 2(a \sigma(xw) - y) ax \sigma'(xw), 
                 \end{aligned} \right.
    \end{equation} 
    with the initial value $w(0) = w_0$ and $a(0) = a_0$. Multiplying the two equations, we see that  
    \begin{equation}\label{limit lemma eq 2}
        -2(a \sigma(xw) - y) x \sigma'(xw) a \dot{a} = -2(a \sigma(xw) - y) \sigma(xw) \dot{w}. 
    \end{equation}
    If $x = 0$, $L(\vtheta, S) = |a \sigma(0) - y|^2$. In this case, the trajectory of GF for $L(\cdot, S)$ clearly converges. Now suppose that $x \neq 0$. If $a \sigma(xw) \neq y$, then 
    \begin{equation}\label{limit lemma appendix eq1}
        a \dot{a} = \frac{\dot{w} \sigma(xw)}{x \sigma'(xw)}. 
    \end{equation}
    If $a(t) \sigma(xw(t)) = y$, then $\dot{a}(t) = \dot{w}(t) = 0$, so (\ref{limit lemma appendix eq1}) also holds. Integrating on both sides of (\ref{limit lemma appendix eq1}), we see that there is some strictly monotonic function $h$ (depends on $x$) such that 
    \begin{equation}
        \frac{1}{2} a^2(t) - \frac{1}{2} a_0^2 = h(w(t)) - h(w_0). 
    \end{equation} 
    Thus, $w(t) = h^{-1} \left( h(w_0) + a^2(t) / 2 - a_0^2 / 2 \right)$ and thus the first equation in (\ref{limit lemma eq 1}) becomes 
    \begin{equation}
        \dot{a} = -\left [ a \sigma(x h^{-1} \left( h(w_0) + a^2(t) / 2 - a_0^2 / 2\right)) - y\right ]\sigma (x h^{-1} \left( h(w_0) + a^2(t) / 2 - a_0^2 / 2\right)). 
    \end{equation} 
    Define $\phi (s) = s \sigma(x h^{-1} (h(w_0) + s^2 / 2 - a_0^2 / 2)) - y$. If $z = h(w_0) - a_0^2 / 2$ then 
    \begin{equation}
    \begin{aligned}
        \phi'(s) &= \sigma \left (x h^{-1} \left(\frac{s^2}{2} + z \right) \right ) + s \sigma' \left (x h^{-1} \left(\frac{s^2}{2} + z \right) \right ) x (h^{-1})' \left ( \frac{s^2}{2} + z \right ) s \\ 
                 &= \sigma \left (x h^{-1} \left(\frac{s^2}{2} + z \right) \right ) + s^2 \sigma' \left (x h^{-1}\left(\frac{s^2}{2} + z \right) \right ) x (h^{-1})' \left ( \frac{s^2}{2} + z \right ). 
    \end{aligned}
    \end{equation}
    Since $h$ is strictly increasing when $x > 0$ and strictly decreasing when $x < 0$, $x (h^{-1})' ( s^2/2 + z )$ is always positive. Thus, $s^2 \sigma'(xh^{-1}(z + s^2/2) x(h^{-1})'(z + s^2/2) > 0$. Moreover, 
    \begin{equation}
        \underline{\lim}_{s \to \pm \infty}\, \sigma \left (x h^{-1} \left ( \frac{1}{2} s^2 + z \right ) \right ) > \sigma(xh^{-1}(z)), 
    \end{equation}
    where the right side of the inequality is positive. It follows $\phi'$ is bounded below and thus $\lim_{a \to \pm \infty} \phi(a) = \pm \infty$, so $\phi$ has a unique zero $a_0^*$. This is the point to which the $a$-component of the GF converges; moreover, if $w_0^* = h^{-1} (h(w_0) + a_0^{*2} / 2  - a_0^2 / 2)$, then $(w_0^*, a_0^*)$ lies in $L^{-1}(\cdot, S) \{ 0 \}$. This completes the proof. 
\end{proof}

\begin{lemma}[Lemma \ref{Lemma Nabla G = 0 at one point}]\label{vanishing nabla G appendix}
     Let $\sigma: \sR \to \sR^+$ be differentiable and strictly increasing. Then 
\begin{itemize}
    \item [(a)] Suppose that $M = 2$. If the implicit regularization for $L$ is characterized by a data-independent function $G \in C^1$ in the weak sense, then there are some $\vtheta_0, \vtheta_0^* \in \sR^2$ such that $\nabla G(\cdot, \vtheta_0) |_{\vtheta_0^*} = 0$, where the derivatives are taken with respect to the first entry of $G$. 
    
    \item [(b)] The result in (a) also holds for general $M \geq 2$. 
\end{itemize}
\end{lemma} 
\begin{proof}~
\begin{itemize}
    \item [(a)] By Lemma \ref{Lemma CP2 Limit}, since $\sigma > 0$ and $\sigma' > 0$, for any dataset $S = \{(x, y)\} \subseteq \sR \times \sR$ with $x \neq 0$ and any $\vtheta_0 = (w_0, a_0)$, the trajectory of GF for $L(\cdot, S)$ has a long-term limit $\vtheta_0^* = (w_0^*, a_0^*) \in L^{-1} (\cdot, S) \{0\}$. Now, if $a_0 \neq 0$ and $S = (x, -a_0 \sigma(x w_0))$, since 
    \begin{equation}
        L(\vtheta_0, S) = |a_0 \sigma(x w_0) - (-a_0 \sigma(x w_0))|^2 = 4(a_0 \sigma(x w_0))^2 > 0, 
    \end{equation} 
    the continuity of $a \sigma(xw) - y$ ensures that $a_0^* \neq a_0$. Thus, $a_0^* = -a_0$. Then 
    \begin{equation}
        h(w_0^*) = h(w_0) + a_0^{*2} / 2 - a^2 / 2 = h(w_0) + 0. 
    \end{equation} 
    Because $h$ is monotonic, we have $w_0^* = w_0$. Thus, $\vtheta_0^* = (w_0, -a_0)$. Moreover, 
    \begin{equation}
    \begin{aligned}
        \lim_{t \to \infty} \frac{\dot{a}(t)}{\dot{w}(t)} &= \lim_{\vtheta \to \vtheta_0^*} \frac{\sigma(x w)}{ax \sigma'(xw)} \\
                                                &= - \frac{\sigma(x w_0)}{a_0 x \sigma'(x w_0)}. 
    \end{aligned} 
    \end{equation} 
    Suppose that $\lim_{t \to \infty} \frac{\dot{a}(t)}{\dot{w}(t)} = k$ for all $x \in \sR$. Then since $\sigma' \neq 0$ on $\sR$, $k \neq 0$ and thus $\frac{\sigma'(x w_0)}{\sigma(x w_0)} = \frac{1}{a_0 k} \frac{1}{x}$. Integrating both sides with respect to $x$, we can see that there are some non-zero constants $A, B \in \sR$ such that 
    \begin{equation}
        \log (A \sigma(x w_0)) = \frac{1}{a_0 k} \log (Bx). 
    \end{equation} 
    Thus, 
    \begin{equation}
        \sigma(xw_0) = \frac{(Bx)^{1/a_0k}}{A},   
    \end{equation}
    which implies that $\sigma$ is a monomial; but then $\sigma(0) = 0$, a contradiction. Thus, there must be two $x_1, x_2 \in \sR$ such that 
    \begin{equation}
        \frac{\sigma(x_1 w_0)}{a_0 x_1 \sigma'(x_1 w_0)} \neq \frac{\sigma(x_2 w_0)}{a_0 x_2 \sigma'(x_2 w_0)}. 
    \end{equation}
    Therefore, by Lemma \ref{Lem One-point}, 
    \begin{equation}
        \nabla G(\cdot, \vtheta_0) |_{\vtheta_0^*} = 0. 
    \end{equation} 
    
    \item [(b)] Let $\vx \neq 0$ and let $\{ \vx/|\vx|, \vb_1, ..., \vb_{M-1} \}$ be an orthonormal basis of $\sR^M$. For any parameter $\vtheta = (w_x \vx/|\vx| + \sum_{i=1}^{M-1} w_i \vb_i, a)$, 
    \begin{equation}
        L(\vtheta, S) = |a \sigma(\vx^\TT \vw) - y|^2 = |a \sigma(|\vx| w_x) - y|^2. 
    \end{equation}
    Therefore by (a), the trajectory of GF for $L(\cdot, S)$ starting at $\vtheta_0 = (\vw_0, a_0)$ with $a_0 \neq 0$ ends at a distinct point $\vtheta_0^* = (\vw_0^*, -a_0^*)$, and the partial derivative of $G(\cdot, \vtheta_0)$ with respect to $\vx/|\vx|$ and $a$ vanish. By letting $\vx$ be the multiples of each of the standard basis of $\sR^M$, we can see that $\nabla G(\cdot, \vtheta_0) |_{\vtheta_0^*} = 0$. 
\end{itemize}
\end{proof}

\begin{proof}[Proof of Theorem \ref{Theorem 2}]\label{Theorem 2 appendix} ~
\begin{itemize}
    \item [(a)] Suppose that $G \in C^1(\sR^M \times \sR^M)$ is strongly convex. For any $\vtheta_0$, $\nabla G(\cdot, \vtheta_0)|_{\vtheta} = 0$ for at most one $\vtheta \in \sR^M$. Fix $\vw_0 \in \sR^{M-1}$ and $a_0 \in \sR \backslash \{0\}$. Since $\sigma, \sigma'$ are strictly positive, Lemma \ref{Lemma Nabla G = 0 at one point} says that $\nabla G(\cdot, (\vw_0, a_0)) |_{(\vw_0, -a_0)} = 0$. By choosing two different values of $a_0$, we can see that there are two points at which $\nabla G(\cdot, \vtheta_0) = 0$. Thus, $G$ is not strongly convex. 
    
    \item [(b)] Fix $\vw_0 \in \sR^{M-1}$ and $a_0 \in \sR \backslash \{0\}$. By Lemma \ref{Lemma Nabla G = 0 at one point}, $\nabla G(\cdot, (\vw_0, a_0)) |_{(\vw_0, -a_0)} = 0$; equivalently, $\nabla G(\cdot, 0)|_{(0, -2a_0)} = 0$ for all $a_0 \neq 0$. Since $\nabla G(\cdot, 0)$ is continuous, $\nabla G(\cdot, 0) |_{(0,0)} = 0$. It follows that for any $\vtheta \in \sR^M$, $G(\cdot, \vtheta)$ must be constant on the set $\{ (0, p): p \in \sR \}$. 
\end{itemize}   
\end{proof}

\begin{proof}[Proof of Corollary \ref{Cor CP2 for 2-layer NN}]\label{Cor CP2 for 2-layer NN Appendix} ~
\begin{itemize}
    \item [(a)] Due to the structure of $g$ we can apply Lemma \ref{Lemma CP2 Limit} and \ref{Lemma Nabla G = 0 at one point} to each pair $(\vw_k, a_k)$. The idea is to fix any $\vtheta_0 = (\vw_{0k}, a_{0k})_{k=1}^m$ and any $\vx \in \sR^d$ such that the components of $\vx$, $\vx_l \neq 0$ for all $1 \le l \le d$, and $g(\vtheta_0, \vx) \neq 0$ and $\sum_{k=1}^m a_{0k} \sigma'(\vx^\TT\vw_{0k}) \neq 0$. Set $S := (\vx, -g(\vtheta_0, \vx))$. Since $L(\vtheta_0, S) > 0$, the GF for $L$ starting at $\vtheta_0$ is not constant. 
    
    For each $1 \le k \le m$ we have 
    \begin{equation*}
    \begin{aligned}
        \dot{a}_k(t) &= - \parf{L}{a_k}(\vtheta, S) = -2(g(\vtheta, \vx) - y) \sigma(\vx^\TT \vw_k(t)) \\ 
        \dot{\vw}_k(t) &= - \parf{L}{\vw_k}(\vtheta, S) = -2(g(\vtheta, \vx) - y) \sigma'(\vx^\TT \vw_k(t)) a_k(t) \vx. 
    \end{aligned}
    \end{equation*}
    This means for any $1 \le k \le m$ and any $1 \le l \le d$, 
    \[
        \dot{a}_k(t) a_k(t) = \frac{(\dot{\vw}_k)_l(t) \sigma(\vx^\TT\vw_k(t))}{\vx_l \sigma'(\vx^\TT\vw_k(t))}. 
    \]
    By the proof of Lemma \ref{Lemma Nabla G = 0 at one point} (see Lemma \ref{vanishing nabla G appendix}), there is some strictly monotonic $h$ depending only on $\vx_l$ and $\sigma$, such that $\frac{1}{2}a_k^2(t) - \frac{1}{2}a_k^2(0) = h(\vw_k(t)) - h(\vw_k(0))$. Thus, applying the proof of Lemma \ref{Lemma Nabla G = 0 at one point}, we can see that $\lim_{t\to\infty} a_k(t) = -a_{0k}$ and $\lim_{t\to\infty} \vw_k(t) = \vw_{0k}$ for each $k$. Moreover, since $\vx_l \neq 0$,  
    \[
        \lim_{t\to\infty} \frac{\dot{a}_k(t)}{(\dot{\vw}_k)_l(t)} = -\frac{g(\vtheta_0, \vx)}{\vx_l \sum_{k=1}^m a_{0k} \sigma'(\vw_{0k} \vx)}. 
    \]
    Again, argue in the same way as in Lemma \ref{vanishing nabla G appendix}, we conclude that by choosing different $\vx$, $\partial_{a_j} G(\cdot, (\vw_{0k}, a_{0k})_{k=1}^m), \partial_{\vw_j} G(\cdot, (\vw_{0k}, a_{0k})_{k=1}^m)$ vanish at $(\vw_{0k}, -a_{0k})_{k=1}^m$, for each $1 \le j \le m$. 

    Now argue in the same way as in Theorem \ref{Theorem 2} by choosing two different initial points. Then the derivative of $G$ vanishes at two points, whence $G$ cannot be strongly convex. 

    \item [(b)] Fix $\vw_{01}, ..., \vw_{0m}$ and choose different $a_{01}, ..., a_{0m}$. Since $G \in \fG_M$, $\nabla G(\cdot, 0)|_{(0, -2a_{0k})_{k=1}^m} = 0$ for almost all $(a_{01}, ..., a_{0m}) \in \sR^m$. Thus, for any $\vtheta \in \sR^{(d+1)m}$, $G(\cdot, \vtheta)$ must be constant on the set $\{(0, p_k)_{k=1}^m: p_k \in \sR\}$.
\end{itemize}
\end{proof}

\begin{propo}[Proposition \ref{Prop CP1 is global}]\label{CP1 is global appendix} 
Let $w \in \sR$. Fix a point $(x_0, y_0) \in \sR^2$ with $y_0 \neq 0$. Let $F:\sR^2 \to \sR$, $F(p,x) = \tsigma(xw)\tsigma(x_0 p) - \tsigma(xp)\tsigma(x_0 w)$. We have 
    \begin{itemize}
        \item [(a)] Suppose that $|F(p,x)| \geq C |p - w|^k |x - x_0|^r$ for some $C > 0$ and $r,k \in \sN$ near $(w,x_0)$. Then for sufficiently small $\delta > 0$, if $0 < |x - x_0| < \delta$, $y \neq 0$ and $y \tsigma(xw) = y_0 \tsigma(x_0w)$, there is no $p \in \sR$ such that $0 < |p - w| < \delta$ and $y \tsigma(xp) = y_0 \tsigma(x_0p)$. 
    
        \item [(b)] Suppose that $\tsigma \in C^2$ and $\tsigma (x_0 w), \tsigma'(x_0 w) \neq 0$. Also suppose
        \begin{equation}
            \frac{1}{w} - x_0 \left [ \frac{\tsigma'(x_0 w)}{\tsigma(x_0 w)} - \frac{\tsigma''(x_0 w)}{\tsigma'(x_0 w)} \right ] \neq 0. 
        \end{equation} 
        Then for sufficiently small $\delta > 0$, if $0 < |x - x_0| < \delta$, $y \neq 0$ and $y \tsigma(xw) = y_0 \tsigma(x_0w)$, there is no $p \in \sR$ such that $0 < |p - w| < \delta$ and $y \tsigma(xp) = y_0 \tsigma(x_0p)$. If, however, $DF(p,x_0) \equiv 0$ for $p$ near $w$ or $DF(w,x) \equiv 0$ for $x$ near $x_0$, then $\sigma$ is a power function near $x_0 w$, i.e., $\sigma(x) = Cx^d$ for some $C, d \in \sR$, when $x$ is sufficiently close to $x_0 w$. 
    \end{itemize} 
\end{propo}
\begin{proof}~
\begin{itemize}
    \item [(a)] Let $g(p) = y\tsigma(xp)$ and $g_0(p) = y_0\tsigma(x_0p)$. Suppose that the values of $g$ and $g_0$ coincide at $w, p$. Then we have 
    \begin{equation}
    \begin{aligned}
        y \tsigma(xw) &= y_0 \tsigma(x_0 w); \\
        y \tsigma(xp) &= y_0 \tsigma(x_0 p). 
    \end{aligned}
    \end{equation} 
    Equivalently, whenever $y \neq 0$ and $y_0 \neq 0$, 
    \begin{equation}
        \tsigma(xw) \tsigma(x_0 p) - \tsigma(xp) \tsigma(x_0 w) = F(p,x) = 0. 
    \end{equation} 
    Now by hypothesis, there is some $\delta, \varepsilon > 0$ such that for any $(p,s) \in \sR^2$ with $\left \| (p, x) - (w, x_0) \right \|_\infty < \delta$, $|F(p,x)| \geq C |p - w|^k |x - x_0|^r > 0$. But this is equivalent to saying that for sufficiently small $\delta > 0$, if $0 < |x - x_0| < \delta$, $y \neq 0$ and $y \tsigma(xw) = y_0 \tsigma(x_0w)$, there is no $p \in \sR$ such that $0 < |p - w| < \delta$ and $y \tsigma(xp) = y_0 \tsigma(x_0p)$. 

    \item [(b)] Fix $w \neq 0$. Note that 
    \begin{equation}
    \begin{aligned}
        F(p,x) &= \tsigma(xw) \tsigma(x_0p) - \tsigma(xp) \tsigma(x_0w) \\ 
               &= [\tsigma(x_0w) + \tsigma'(x_0w)(x - x_0)w + o(x - x_0)] \tsigma(x_0p) \\ 
               &\,\,\,\,\,\, - [\tsigma(x_0p) + \tsigma'(x_0p)(x - x_0)p + o(x - x_0)] \tsigma(x_0w) \\ 
               &= \tsigma'(x_0w) \tsigma(x_0p)(x - x_0)w - \tsigma'(x_0p) \tsigma(x_0w)(x - x_0)p + o(x - x_0) \cdot (\tsigma(x_0p) - \tsigma(x_0w)) \\ 
               &= (x - x_0) [\tsigma'(x_0w) \tsigma(x_0p)w - \tsigma'(x_0p) \tsigma(x_0w)p] + o((x - x_0)(p - w)). 
    \end{aligned}
    \end{equation} 
    It suffices to show that $\tsigma'(x_0w) \tsigma(x_0p)w - \tsigma'(x_0p) \tsigma(x_0w)p = \Omega(p - w)$ for all $p$ sufficiently near $w$. Then we can find some $C > 0$ such that $|\tsigma'(x_0w) \tsigma(x_0p)w - \tsigma'(x_0p) \tsigma(x_0w)p| \geq C |p - w|$ and $o((x - x_0)(p - w)) \leq C|x - x_0||p - w| / 2$. Then 
    \begin{equation}
        |F(p,x)| \geq |x - x_0||p - w|C - o((x - x_0)(p - w)) \geq \frac{C}{2} |x - x_0||p - w|, 
    \end{equation} 
    when $x$ is sufficiently close to $x_0$ and $p$ sufficiently close to $w$. Thus, by (a) there is some $\delta > 0$ such that for small enough $\delta > 0$, if $0 < |x - x_0| < \delta$, $y \neq 0$ and $y \tsigma(xw) = y_0 \tsigma(x_0w)$, there is no $p \in \sR$ such that $0 < |p - w| < \delta$ and $y \tsigma(xp) = y_0 \tsigma(x_0p)$. Since $\tsigma \in C^2$, when $\tsigma'(x_0 w), \tsigma''(x_0 w) \neq 0$, this is equivalent to proving 
    \begin{equation}\label{CP1 is global eq 1}
        \frac{\tsigma'(x_0p)}{\tsigma'(x_0w)} - \frac{w}{p} \frac{\tsigma(x_0p)}{\tsigma(x_0w)} = \Omega(p - w) 
    \end{equation} 
    for all $p$ near $w$. Note that we have 
    \begin{equation}
    \begin{aligned}
        \frac{\tsigma'(x_0p)}{\tsigma'(x_0w)} &= \frac{\tsigma'(x_0 w) + \tsigma''(x_0 w)x_0 (p - w) + o(p - w)}{\tsigma'(x_0w)} \\ 
                                              &= 1 + \frac{\sigma''(x_0 w)}{\tsigma'(x_0 w)} x_0 (p - w) + o(p - w) 
    \end{aligned}
    \end{equation} 
    and  
    \begin{equation}
    \begin{aligned}
        \frac{w}{p} \frac{\tsigma(x_0p)}{\tsigma(x_0w)} &= \left (\frac{w - p}{p} + 1 \right ) \frac{\tsigma(x_0 w) + \tsigma(x_0 w)x_0 (p - w) + o(p - w)}{\tsigma(x_0 w)} \\
        &=  \left (\frac{w - p}{p} + 1 \right ) \left ( 1 + \frac{\tsigma'(x_0 w)}{\tsigma(x_0 w)} x_0 (p - w) + o(p - w) \right ) \\ 
        &= 1 + \frac{w - p}{p} + \frac{\tsigma'(x_0 w)}{\tsigma(x_0 w)} x_0 (p - w) + o(p - w). 
    \end{aligned}
    \end{equation} 
    Thus, the left side of (\ref{CP1 is global eq 1}) becomes 
    \begin{equation}
            \frac{\tsigma'(x_0p)}{\tsigma'(x_0w)} - \frac{w}{p} \frac{\tsigma(x_0p)}{\tsigma(x_0w)} 
        =   \left [ \frac{\tsigma''(x_0 w)}{\tsigma'(x_0 w)} x_0 + \frac{1}{p} - \frac{\tsigma'(x_0 w)}{\tsigma(x_0 w)} x_0 \right ](p - w) + o(p - w). 
    \end{equation} 
    By hypothesis, there is some $\delta > 0 $  and some $C > 0$ such that for any $p \in (w - \delta, w + \delta)$, 
    \begin{equation}
        \left |\frac{1}{p} - x_0 \left [ \frac{\tsigma'(x_0 w)}{\tsigma(x_0 w)} - \frac{\tsigma''(x_0 w)}{\tsigma'(x_0 w)} \right ] \right | \geq C. 
    \end{equation} 
    This proves (\ref{CP1 is global eq 1}), which in turn completes the first part of our proof. 
    
    Now assume that $DF(w, x) \equiv 0$ for $x$ near $x_0$. Thus, there is a neighborhood $U$ of $x_0$ on which $\parf{F}{p} (w, x) = x_0 \tsigma(xw) \tsigma'(x_0w) - x \tsigma'(xw)\tsigma(x_0w)$ vanishes. Because $x_0 \neq 0$ and by hypothesis, $\sigma'(x_0w) \neq 0$, we can make this $U$ so small that $0 \notin U$ and $\sigma'(xw) \neq 0$ for $x \in U$. This means 
    \begin{equation}
        \frac{\tsigma'(xw)}{\tsigma(xw)} = \frac{x_0 \tsigma'(x_0 w)}{\tsigma(x_0 w)} \frac{1}{x}. 
    \end{equation} 
    Arguing in the same way as the proof of Lemma \ref{Lemma Nabla G = 0 at one point}, we can see that there are non-zero constants $A, B$ such that 
    \begin{equation}
        \log (A \tsigma(xw)) = \frac{x_0 \tsigma'(x_0 w)}{\tsigma(x_0 w)} \log (Bx). 
    \end{equation} 
    Therefore $\tsigma$, and thus $\sigma$, is a power function. Similarly, when $DF(p, x_0) = 0$ for $p$ near $w$, we can integrate and deduce that $\sigma$ is a power function near $w$.  
\end{itemize}
\end{proof}

\begin{coro}[Corollary \ref{Cor CP1 is global examples}]\label{CP1 global corollary appendix} 
Following the notations in Proposition \ref{Prop CP1 is global}, all the results below hold. 
\begin{itemize}
    \item [(a)] Any $\sigma$ and $w, x_0 \neq 0$ such that $\sigma$ is a power function on a neighborhood of $x_0 w$ (this includes ReLU and PReLU and Heaviside) satisfies $F = 0$ near $(w, x_0)$. 

    \item [(b)] For any analytic activation $\sigma$ and any $x_0, w \in \sR$ such that the zero locus of the function $F(p,x) = \tsigma(xw)\tsigma(x_0p) - \tsigma(xp)\tsigma(x_0w)$ satisfies  
    \[
        F^{-1}\{0\} \cap U = \{(p,x) \in U: p = w \} \cup \{(p,x) \in U: x = x_0 \} 
    \]
    for some neighborhood $U \subseteq \sR^2$ of $(w, x_0)$, we can find a sufficiently small $\delta > 0$ such that if $0 < |x - x_0| < \delta$, $y \neq 0$ and $y \tsigma(xw) = y_0 \tsigma(x_0w)$, there is no $p \in \sR$ with $0 < |p - w| < \delta$ and $y \tsigma(xp) = y_0 \tsigma(x_0p)$. 

    \item [(c)] If $\sigma = e^x$ or $\sigma = e^{-x^2}$, then for any $x_0 \in \sR$, we can find a sufficiently small $\delta > 0$ such that if $0 < |x - x_0| < \delta$, $y \neq 0$ and $y \tsigma(xw) = y_0 \tsigma(x_0w)$, there is no $p \in \sR$ with $0 < |p - w| < \delta$ and $y \tsigma(xp) = y_0 \tsigma(x_0p)$. 

    \item [(d)] Let $w > 0$. If $\sigma = \frac{1}{1 + e^{-x}}$, for any $x_0 \in (-\infty, w^{-1})\cup(2w^{-1},\infty)$, we can find a sufficiently small $\delta > 0$ such that if $0 < |x - x_0| < \delta$, $y \neq 0$ and $y \tsigma(xw) = y_0 \tsigma(x_0w)$, there is no $p \in \sR$ with $0 < |p - w| < \delta$ and $y \tsigma(xp) = y_0 \tsigma(x_0 p)$. 
\end{itemize}
\end{coro}
\begin{proof} ~
\begin{itemize}
    \item[(a)] Note that $\sigma$ is a power function near $w$ if and only if $\tsigma$ is. Therefore, suppose that $\tsigma(x) = x^q$ for some $q \in \sR$, then for any $(x, p)$ sufficiently close to $(x_0, w)$, 
    \begin{equation*}
        F(p,x) = (xw)^q (x_0 p)^q - (xp)^q (x_0 w)^q = 0. 
    \end{equation*} 
    
    \item [(b)] Apply Lojasiewicz distance inequality to $F^2$ on $U$. Denote $\ell_1 := \{(p,x) \in \sR^2: p = w \}$ and $\ell_2 := \{(p,x) \in \sR^2: x = x_0\}$. Since $(F^2)^{-1}\{0\} = F^{-1}\{0\}$, there are some $C, \beta > 0$ such that for $(p,x) \in U$ sufficiently close to $(w, x_0)$, we have 
    \begin{equation*}
    \begin{aligned}
        |F^2(p,x)| 
        &\geq C \mathrm{dist}\left((p,x), F|_{U}^{-1}\{0\}\right)^\beta \\ 
        &=    C \mathrm{dist}\left((p,x), F^{-1}\{0\} \cap U \right)^\beta \\ 
        &\geq C \min\{ \mathrm{dist}\left((p,x), \ell_1\right)^\beta, \mathrm{dist}\left((p,x), \ell_2 \right)^\beta \} \\ 
        &=    C \min\{|x - x_0|^\beta, |p - w|^\beta \} \\ 
        &\geq C |x - x_0|^\beta |p - w|^\beta. 
    \end{aligned}
    \end{equation*}
    This shows that the assumption in Proposition \ref{Prop CP1 is global} (a) is satisfied, so the desired result follows. 

    \item [(c)] Note that both $e^x$ and $e^{-x^2}$ are (real) analytic functions. We will prove by applying the result in (b). For $\sigma(x) = e^x$, when $F(p,x) = 0$ we must have 
    \[
        e^{-xw} e^{-x_0p} = e^{-xp} e^{-x_0w}, 
    \]
    which gives $xw + x_0p = xp + x_0w$, i.e., $(x-x_0)(p-w) = 0$. Thus, either $p = w$ or $x = x_0$. Similarly, for $\sigma(x) = e^{-x^2}$, when $F(p,x) = 0$ we must have 
    \[
        e^{x^2w^2} e^{x_0^2 p^2} = e^{x^2p^2 + x_0^2 w^2}. 
    \]
    Thus, $x^2w^2 + x_0^2 p^2 = x^2 p^2 + x_0^2 w^2$, which gives $(x^2 - x_0^2)(p^2 - w^2) = 0$ and this holds when $x^2 = x_0^2$ or $w^2 = p^2$. Thus, for $(p,x)$ sufficiently close to $(w, x_0)$, we must have $x = x_0$ or $p = w$. 

    The proof above shows that either for $\sigma(x) = e^x$ or $\sigma(x) = e^{-x^2}$, we can find some neighborhood $U \subseteq \sR^2$ of $(w, x_0)$ such that 
    \[
        F^{-1}\{0\} \cap U = \{(p,x) \in U: p = w \} \cup \{(p,x) \in U: x = x_0 \}. 
    \]
    Therefore, the desired result follows from (b). 

    \item [(d)] $\tsigma(x) = 1 + e^{-x}$. Let $u = x_0 w$. Therefore, 
    \begin{equation}
        1 - u\left (\frac{\tsigma'(u)}{\tsigma(u)} - \frac{\tsigma''(u)}{\tsigma'(u)} \right )= 1 - \frac{x}{1 + e^{-x}}. 
    \end{equation} 
    Let $z$ denote the root of $1 - \frac{x}{1 + e^{-x}}$. Since $z \in (1,2)$, it follows that when $u \in (-\infty, 1)\cup (2, \infty)$, the desired result follows from Proposition \ref{Prop CP1 is global}. 
\end{itemize}
\end{proof}

\end{document}